\theoremstyle{plain}
\newtheorem{theorem}{Theorem}[section]
\newtheorem{lemma}[theorem]{Lemma}
\theoremstyle{definition}
\newtheorem{assumption}[theorem]{Assumption}
\theoremstyle{remark}
\let\classAND\AND
\let\AND\relax
\let\AND\classAND
\newcommand{\myeq}[1]{\ensuremath{\stackrel{\textit{\text{#1}}}{=}}}
\newcommand{\myleq}[1]{\ensuremath{\stackrel{\textit{\text{#1}}}{\leq}}}
\newcommand{\mypropto}[1]{\ensuremath{\stackrel{\textit{\text{#1}}}{\propto}}}
\newcommand{\algname}{Options-UCBVI}
\DeclareMathOperator*{\E}{\mathbb{E}}
\DeclareMathOperator*{\Var}{\mathbb{V}\mathrm{ar}}
\DeclareMathOperator{\V}{\mathbb{V}}
\newcommand{\argmax}{\text{argmax}}
\title{A Provably Efficient Option-Based Algorithm\\for both
High-Level and Low-Level Learning}
\author{Gianluca Drappo\\
    \texttt{gianluca.drappo@polimi.it}\\
    DEIB\\
    Politecnico di Milano\\ 
    Milan, 20133, Italy\\
    \And
    Alberto Maria Metelli\\
    \texttt{albertomaria.metelli@polimi.it}\\
    DEIB\\
    Politecnico di Milano\\ 
    Milan, 20133, Italy\\
    \And
    Marcello Restelli\\
    \texttt{marcello.restelli@polimi.it}\\
    DEIB \\
    Politecnico di Milano\\ 
    Milan, 20133, Italy
    }
\begin{document}
\setlength{\abovedisplayskip}{3pt}
\setlength{\belowdisplayskip}{3pt}
\setlength{\textfloatsep}{5pt}

\maketitle

\begin{abstract}
Hierarchical Reinforcement Learning (HRL) approaches have shown successful results in solving a large variety of complex, structured, long-horizon problems. Nevertheless, a full theoretical understanding of this empirical evidence is currently missing. In the context of the \emph{option} framework, prior research has devised efficient algorithms for scenarios where options are \emph{fixed}, and the high-level policy selecting among options only has to be learned. However, the fully realistic scenario in which \emph{both} the high-level and the low-level policies are learned is surprisingly disregarded from a theoretical perspective. This work makes a step towards the understanding of this latter scenario. Focusing on the finite-horizon problem, we present a meta-algorithm alternating between regret minimization algorithms instanced at different (high and low) temporal abstractions. At the higher level, we treat the problem as a Semi-Markov Decision Process (SMDP), with fixed low-level policies, while at a lower level, inner option policies are learned with a fixed high-level policy.
The bounds derived are compared with the lower bound for non-hierarchical finite-horizon problems, allowing to characterize when a hierarchical approach is provably preferable, even without pre-trained options.
\end{abstract}

\section{Introduction}
Hierarchical Reinforcement Learning \citep[HRL,][]{pateria2021hierarchical} is a framework in the class of Reinforcement Learning~\citep[RL,][]{sutton2018reinforcement} methods that has shown successful results in recent years thanks to its ability to deal with complex, long-horizon, and structured problems \citep{bacon2017option, vezhnevets2017feudal, levy2019learning, nachum2018data}. In a large variety of real-world scenarios, a complex task can be decomposed as a concatenation of different sub-tasks that are often solved as a whole to learn the optimal policy. Nevertheless, in several cases, these sub-tasks are not fully coupled, and solving them separately leads to (near)optimal solutions. In these circumstances, a \emph{hierarchical} RL approach could deliver significant benefits w.r.t. the application of \emph{flat} RL algorithms, thanks to its ability to properly exploit the structure of the environment.
A common example in the HRL literature \citep{dietterich2000hierarchical} is the \emph{taxi problem}, in which an autonomous agent controls a taxi that has to bring a passenger from a starting point to a destination location. This problem embodies three different tasks: (i) driving, (ii) picking up, and (iii) dropping off the passenger. The HRL power resides in the explicit exploitation of this inner structure, subdividing the problem into a set of sub-tasks, individually solvable with their own optimal policies, which are then linked sequentially, one after the other.
This approach naturally reduces each problem's complexity, letting the agent focus on one objective at a time. 

Recent works have attempted to analyze the theoretical benefits that motivate the great successes of HRL in practice \citep{mann2015approximate, fruit2017exploration, fruit2017regret, wen2020efficiency, drappo2023an, robert2024sample}. Most of them focus on problems organized in two-level hierarchies, where the high-level policy has control over a set of \emph{pre-trained} \emph{options}~\citep{precup1997multi}, i.e., a particular formalization of temporally extended actions or sub-tasks, and the options' policies control the actual interaction with the environment throughout the \emph{primitive actions}. Using this set of fixed options helps to reduce the complexity of particular classes of problems, where the structure enforced by the options does not compromise optimality \citep{fruit2017exploration, fruit2017regret}. 
While this clearly motivates the performance improvements empirically experienced in several tasks, when to prefer such approaches in situations where \emph{no pre-trained} supportive policies are available, and, thus, the agent is required to face the problem from scratch, solving both the high and the low-level training, is still an open question. 
To the best of our knowledge, only \cite{drappo2023an} provide a preliminary insight in this direction, proposing an approach that first learns the optimal options' policies and then exploits them to learn the original task. However, while overcoming the need for a fixed set of pre-trained policies, they incur sub-optimal performances as any Explore-then-Commit approach \citep{lattimore2020bandit}, making it hardly comparable with the best performance achievable by a flat algorithm.\footnote{An extended discussion of the related works can be found in Appendix \ref{app:relworks}.}

This paper aims to introduce High-Level/Low-level Meta-Learning, the first method designed to efficiently handle the lack of pre-trained policies, enabling effective learning of the entire task from scratch. The key idea involves dividing the learning process of the two levels into multiple phases, rather than just two, and consistently switching between them by keeping one level fixed while the other is learning. In this way, the inherent non-stationarity that arises is mitigated. However, to have efficient performances, a fundamental requirement is the use of efficient regret minimizers for both levels. Nevertheless, while \cite{azar2017minimax} proposed an algorithm that achieves the best possible performance in FH-MDPs (i.e., the \emph{low-level}), no existing works in the literature propose a valid alternative when dealing with temporally extended actions. Therefore, to jointly learn both level policies, we introduce \algname, an efficient regret minimizer based on UCBVI for FH-SMDPs, to handle the \emph{high-level} problem efficiently.

\textbf{Original Contributions}~~The contributions of this paper can be summarized as follows:
\begin{itemize}[noitemsep, leftmargin=*, topsep=0pt]
    \item We derive \emph{\algname} (O-UCBVI), a novel regret minimization algorithm for FH-SMDPs, that enjoys an upper bound on the regret of order $\tilde{O}(H\sqrt{SOKd})$\footnote{$\Tilde{O}$ neglects logarithmic terms.}, where $S$ the number of state, $O$ the cardinality of the option set given, $d$ the average per-episode number of played options, and $K$ the number of episodes (Section \ref{sec:opt-ucbvi}).
    \item We propose the first algorithm, named \emph{High-Level/Low-level Meta-Learning} (HLML), for simultaneously learning at both the high- and the low-levels, exploiting \algname~for the \emph{high-level} and UCBVI for the \emph{low-level} (i.e., the options learning). It provides regret guarantees of order $\Tilde{O}(C^LH\sqrt{SOKd}+ C^H H_O\sqrt{OSAKH_O})$ where other than the already mentioned constants, $A$ is the primitive action space cardinality, $C^H$, and $C^L$ are concentrability coefficient that will be analyzed later, and $H_O$ is an upper bound of the options' duration. By comparing this result with the lower bound on the regret for \emph{flat} problems \citep{osband2016lower}, we've been able to characterize specific classes of problems in which the former delivers provably better theoretical guarantees, answering the question \textit{``when to prefer HRL to standard RL, if both high-level and low-level policies are unknown?''}(Section \ref{sec:two-phase}).
\end{itemize}
The proofs of all the results presented in the main paper are reported in the Appendix~\ref{proof1}-\ref{proof2}.

\section{Problem Formulation}\label{sec:preliminaries}
In this section, we provide the necessary background employed in the subsequent sections.\footnote{Let $N \in \mathbb{N}$, we denote with $[N] \coloneqq \{1,\dots,N\}$.}

\textbf{Finite-Horizon MDPs}~~A Finite-Horizon Markov Decision Process \citep[FH-MDP,][]{puterman2014markov} is a tuple $\mathcal{M} = (\mathcal{S, A}, r^L, p^L, H)$, where $\mathcal{S}$ is the state space with caridnality $S$; $\mathcal{A}$ the (\emph{low-level} or \emph{primitive}) action space with cardinality $A$; $r^L: \mathcal{S\times A} \times [H] \rightarrow [0,1]$ is the reward function, which quantifies the quality $r^L(s,a,h)$ of action $a\in\mathcal{A}$ in state $s\in \mathcal{S}$ at stage $h\in [H]$; $p^L: \mathcal{S\times A\times} [H] \times \mathcal{S} \rightarrow [0,1]$ is the transition model, defining the probability $p^L(s'|s,a,h)$ of transitioning to state $s' \in \mathcal{S}$ by taking action $a\in \mathcal{A}$ in state $s\in\mathcal{S}$ at stage $h \in [H]$; and $H \in \mathbb{N}$ is the horizon. 
The behavior of an agent is modeled by a (\emph{low-level}) deterministic policy $\pi: \mathcal{S} \times [H] \rightarrow \mathcal{A}$ that maps a state $s \in \mathcal{S}$ and a stage $h \in [H]$ to a (\emph{low-level} or \emph{primitive}) action $\pi(s,h) \in \mathcal{A}$. 

\textbf{Finite-Horizon Semi-MDPs}~~A Finite-Horizon Semi-Markov Decision Process \citep[FH-SMDP,][]{drappo2023an} is the adaptation of Semi-Markov Decision Processes \citep[SMDP]{baykal2010semi} to finite-horizon setting. An FH-SMDP is defined as a tuple $\mathcal{SM} = (\mathcal{S, O}, r^H, p^H, H)$, where $\mathcal{S}$ and $H$ are the same quantities of FH-MDPs;
$\mathcal{O}$ is a set of temporally extended actions (\emph{high-level}), with cardinality $O$; $r^H: \mathcal{S} \times \mathcal{O} \times [H] \rightarrow [0,H]$ is the (\emph{high-level}) cumulative reward obtained $r^H(s,o,h)$, until the temporally extended (\emph{high-level)} action $o \in \mathcal{O}$ terminates, when selected in state $s \in \mathcal{S}$, at stage $h \in [H]$; $p^H: \mathcal{S} \times \mathcal{O} \times [H] \times \mathcal{S} \times [H] \rightarrow [0,1]$ is the transition model, defining the probability $p^H(s',h'|s,o,h)$ of transitioning to state $s' \in \mathcal{S}$, after $(h-h')$ time steps, $h' \in [H]$, when playing (\emph{high-level}) action $o \in \mathcal{O}$, in state $s \in \mathcal{S}$, and stage $h \in [H]$.
The behavior of an agent is modeled by a deterministic (\emph{high-level}) policy $\mu: \mathcal{S} \times [H] \rightarrow \mathcal{O}$ that maps a state and a stage $h \in [H]$ to a (\emph{high-level}) action $\mu(s,h) \in \mathcal{O}$.

HRL builds upon the theory of Semi-MDPs, characterizing the concept of temporally extended action with fundamentally two frameworks \citep{pateria2021hierarchical}: sub-tasks \citep{dietterich2000hierarchical} and options \citep{sutton1999between}. For the sake of this paper, we focus on the options framework. 

\textbf{Options}~~An option \citep{sutton1999between} is a temporally extended action characterized by three components $o = (\mathcal{I}^o, \beta^o, \pi^o)$. $\mathcal{I}^o \subseteq \mathcal{S} \times [H]$ is the subset of states and stages pairs $(s,h)\in \mathcal{S} \times [H]$ in which the option can start, $\beta^o: \mathcal{S} \times [H] \to [0,1]$ defines the probability $\beta^o(s,h)$ that an option terminates in state $s \in \mathcal{S}$ and stage $h \in [H]$, and, $\pi^o: \mathcal{S} \times [H] \to \mathcal{A}$ is the deterministic policy executed once an option is selected and until its termination.

Before proceeding, we introduce the following standard assumption.
\begin{assumption}[Admissible options \cite{fruit2017exploration}]
The set of options $\mathcal{O}$ is assumed \emph{admissible}, i.e., $\forall o \in \mathcal{O},\, s \in \mathcal{S},\, \text{and } h \in [H]: \ \beta^o(s,h) > 0 \implies \exists o' \in \mathcal{O}: \ (s,h) \in \mathcal{I}^{o'}$. 
\end{assumption}
The assumption is a minimal requirement for the problem to be well-defined, and it guarantees that whenever an option $o$ stops in a state $s$ at stage $h$, there always exists another option $o'$ that can start from the state-stage pair $(s,h)$.

\textbf{Average per-episode duration}~~ In the following analysis, we will refer to $d$ \citep{drappo2023an} as the average per-episode number of decisions taken in an episode of length $H$:
\begin{align}
\resizebox{.33\textwidth}{!}{$\displaystyle
    d \coloneqq \frac{1}{K} \sum_{o \in \mathcal{O}} \sum_{s \in \mathcal{S}} \sum_{h \in [H]} n_{K+1}(s,o,h) \label{eq:d}
$}
\end{align}
where $n_{K+1}(s,o,h)$ is the number of times a temporally extended action (or option) $o$ has been selected in state $s$, in step $h$, up to episode $K$ of interaction with the environment.

\textbf{Problem Formulation}~~We are given a set of \emph{not pre-trained} options $\mathcal{O}$, i.e., for every option $o \in \mathcal{O}$, the initiation set $\mathcal{I}^{o}$ and the termination function $\beta^o$ are \emph{fixed}, while the inner low-level policy $\pi^o$ has to be learned. We seek to learn \emph{both} the high-level policy $\mu$ (selecting options in the FH-SMDP) and the low-level policies  $\pi^o$ (inner to the options) for every $o \in \mathcal{O}$ as follows:
\begin{align}\label{eq:eqeqeq}
    (\mu^*, \bm{\pi}^*) \in \mathop{\mathrm{argmax}}_{\mu,\bm{\pi}} V^{\mu}_{\bm{\pi}}(s_1,1),
\end{align}
where $\bm{\pi} = (\pi^o)_{o \in \mathcal{O}}$ are the low-level policies and $\mu$ is the high-level policy, $s_1 \in \mathcal{S}$ is an initial state, and $V_{\bm{\pi}}^{\mu}$ is the value function, defined for every $(s,h) \in \mathcal{S}\times [H]$ as:
\begin{align}
    &\resizebox{.6\textwidth}{!}{$\displaystyle
    V_{\bm{\pi}}^{\mu}(s,h) \coloneqq \E_{(s',h')\sim p^H(\cdot |s,\mu(s,h),h)}\Big[ r^H(s,\mu(s,h),h) + V^{\mu}_{\bm{\pi}}(s', h') \Bigr],$} \\
    &\resizebox{.8\textwidth}{!}{$\displaystyle
    r^H(s,o,h) \coloneqq \E_{s'' \sim p^L(\cdot|, s, \pi^o(s,h), h)} \Big[ r^L(s,\pi^o(s, h),h) + (1 - \beta^o(s'',h+1))r^H(s'', o, h+1) \Big].$}
\end{align}
We denote with $V^*_{\bm{*}}(s_1,1) = V^{\mu^*}_{\bm{\pi}^*}(s_1,1)$. 

\textbf{Regret}~~The \emph{(cumulative) regret}~\citep{azar2017minimax, fruit2017exploration, zanette2018, drappo2023an} of an algorithm $\mathfrak{A}$ for the problem defined above is the cumulative value difference over $K$ episodes when playing the high-level policy $\mu_k$ and the low-level policies $\bm{\pi}_k$ at the episode $k \in [K]\coloneqq \{1,\dots,K\}$ instead of the optimal ones:
\begin{align*}
    R(\mathfrak{A}, K) \coloneqq \sum_{k=1}^K V^*_{\bm{*}}(s_1,1) - V^{\mu_k}_{\bm{\pi}_k}(s_1,1)
\end{align*}
Thus the goal of the algorithm is to play a sequence of policies $\mu_0,  \dots, \mu_K$, and $\bm{\pi}_0,  \dots, \bm{\pi}_K,$ such that R($\mathfrak{A},K$) is as small as possible.

\section{\algname}\label{sec:opt-ucbvi}
In order to develop an algorithm that jointly solves both high and low-level problems, it is necessary to handle each level efficiently. However, while \cite{azar2017minimax} proposes an optimal method for FH-MDP to solve the low level, no provably efficient counterparts have been proposed for FH-SMDPs, leaving the high level untreated. In this section, we introduce the first novel contribution of this work, which is a provably efficient algorithm for this framework.

Our method, named \emph{\algname} (O-UCBVI, Algorithm \ref{alg:opt-ucbvi}), is a model-based approach built upon UCBVI \citep{azar2017minimax} that exploits the given set of options $\mathcal{O}$ to learn the optimal FH-SMDP policy $\mu^*$. The key contribution of this algorithm is its explicit handling of temporally extended actions, which introduces an additional source of stochasticity due to their random duration. To address this issue, first, an estimate of the transition model is computed solely with the data collected from the SMDP, generating an estimate of a \emph{multi-step dynamic}, thereby ignoring the primitive state-action pairs visited during option execution (line \ref{alg-row:p}). Then, we address a more crucial point: the random duration of options (i.e., temporally extended actions) makes the strict application of backward induction, used by UCBVI to compute the optimistic value function, unfeasible. Intuitively, the value of a certain state-step pair, $V(s, h)$, needs to be back-propagated not only to the previous state-step pair $(s_{h-1}, h-1)$ but to any state-step pair where an option that would ultimately lead to $(s, h)$ could be selected. To handle this problem, we introduce a \emph{backward-forward} mechanism presented in lines \ref{alg-row:backward_b}-\ref{alg-row:backward_e}. Within the first loop, $h = H,...,1$, we move \emph{backward}, as in standard backward induction. However, in the inner loop, $h' = h+1,..., H+1$, we project to any possible future state-step pair reachable by playing an option in the current one (\emph{forward move}) to update the current value with those of future pairs. By employing this backward induction, we handle the randomness of the options' duration, ensuring proper computation of the values.

Up to this crucial change, \algname~ follows the same philosophy as UCBVI-BF. It implements the concept of \emph{optimism in the face of uncertainty} for SMDPs, with a tailored bonus added to the empirical Bellman operator (lines \ref{alg-row:bonus}-\ref{alg-row:bellman}), which mitigates the exploitation of known solutions and encourages strategic exploration of more uncertain regions of the SMDP. From a technical perspective, we modified the exploration bonus to deal with the non-stationary transition model and the set of given options, with their temporally extended nature. In particular, we focused on the version using the \emph{Bernstein-Freedman} \citep{freedman1975tail, maurer2009empirical} bonus in order to achieve tight regret guarantees.
Therefore, by following the same intuition of the analysis of UCBVI and adapting it to non-stationary transitions and the different backward induction, we end up demonstrating the following regret guarantee.

\begin{restatable}{theorem}{optucbvi}\label{thm:opt-ucbvi}
Let $\mathcal{SM}$ be an FH-SMDP with $S$ states and $O$ temporally extended actions (options), known reward,\footnote{The choice of assuming a known reward is for compliance with~\cite{azar2017minimax}. Nevertheless, learning the reward function is known to be a negligible task compared to learning the transition model of the environment and, consequently, will not alter the regret order.} bounded primitive reward $r^L(s,a,h) \in [0,1]$. The regret suffered by algorithm \algname~in $K$ episodes of horizon $H$ is bounded, with probability $1-\delta$, by:
\begin{align} \label{eq:opt-regret}
    \text{Regret}(\text{O-UCBVI}, K) \leq \Tilde{O} \left(H\sqrt{SOKd} + H^3S^2Od + H\sqrt{Kd} \right),
\end{align}
where $d$ is the average per-episode number of options played during the execution of the algorithm.
\end{restatable}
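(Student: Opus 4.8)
The plan is to adapt the regret analysis of UCBVI-BF \citep{azar2017minimax} to the semi-Markov setting, where the two novelties are the non-stationary (stage-dependent) transition model $p^H(s',h'\mid s,o,h)$ and the random-duration \emph{backward-forward} value propagation. First I would establish \emph{optimism}: defining the empirical Bellman operator together with the Bernstein-Freedman bonus $b_k$, I would show by induction on the backward-forward recursion that, with probability at least $1-\delta$, the computed value $\overline{V}_k(s,h)$ upper-bounds the true optimal value $V^*(s,h)$ for all $(s,h)$ and all episodes $k$. The subtlety relative to the MDP case is that the Bellman backup at $(s,h)$ sums contributions over all future arrival pairs $(s',h')$ with $h'>h$ reachable by an option, so the concentration of the empirical transition must hold jointly over the enlarged outcome space indexed by $(s',h')\in\mathcal{S}\times[H]$; this is exactly why the bonus carries an extra $S$ and an $H$ inside the logarithmic/variance terms and why the leading term reads $H\sqrt{SOKd}$ rather than $H\sqrt{SOK}$.

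Next I would pass to a \emph{regret decomposition}. Conditioning on the optimism event, the per-episode regret is bounded by $\overline{V}_k(s_1,1)-V^{\mu_k}(s_1,1)$, which I would unroll along the trajectory actually generated in episode $k$ using the performance-difference / simulation identity for SMDPs. This produces two kinds of terms: the accumulated exploration bonuses along the visited option-transitions, and a martingale-difference term capturing the gap between the empirical and true next-state values, which I would control by the Azuma–Hoeffding inequality. The key bookkeeping device is that the number of high-level decisions in episode $k$ is random but its average over $K$ episodes is exactly $d$ (equation \eqref{eq:d}); so whenever a sum ranges over the per-episode decision steps, summing over $k$ replaces a naive factor of $H$ (or of the worst-case option count) by $Kd$, which is what yields $\sqrt{Kd}$ and $Kd$ factors in the three terms of \eqref{eq:opt-regret}.

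The heart of the argument is then the \emph{Bernstein/law-of-total-variance} step that turns the summed variance-based bonuses into the advertised bound. I would use a pigeonhole (counting) argument over the visitation counts $n_k(s,o,h)$ to bound $\sum_k \sum_{\text{steps}} 1/\sqrt{n_k}$ by $\sqrt{SOKd}$, and then apply a law-of-total-variance argument adapted to the multi-step transitions to show that the sum of the per-step value variances telescopes to $O(H^2)$ rather than $O(H^3)$, saving a factor of $\sqrt{H}$ and giving the leading $H\sqrt{SOKd}$. The lower-order terms $H^3S^2Od$ (from the bias of the empirical transition entering the bonus, where the extra $S$ again comes from the enlarged $(s',h')$ outcome space) and $H\sqrt{Kd}$ (from the Azuma martingale term) collect the remaining contributions.

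I expect the main obstacle to be making the law-of-total-variance argument rigorous under the backward-forward recursion. In the standard finite-horizon MDP the variance telescopes cleanly because each stage's value feeds into exactly the previous stage; here an option selected at $(s,h)$ can terminate at any $(s',h')$ with $h'>h$, so the variance of the \emph{cumulative} option reward-plus-value must be related back to a telescoping quantity without double-counting the intermediate primitive steps skipped during option execution. Carefully defining the right stage-indexed filtration and the right notion of ``value at an arrival pair'' so that the total-variance identity still closes — and verifying that the random durations do not inflate the $H$-dependence beyond $H^2$ in the dominant term — is the step I would budget the most care for.
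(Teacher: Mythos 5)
Your proposal follows essentially the same route as the paper's proof: an adaptation of the UCBVI-BF analysis with optimism under a Bernstein--Freedman bonus, a per-step regret decomposition into bonus, estimation-error, correction, and martingale terms (the latter via Azuma), a pigeonhole argument over the non-stationary counts $n_k(s,o,h)$ whose total across episodes is $Kd$, and a law-of-total-variance step bounding the summed variances by $O(KH^2)$, with the same attribution of the three terms ($H\sqrt{SOKd}$ leading, $H^3S^2Od$ from the second-order/bonus-bias contributions, $H\sqrt{Kd}$ from the martingale). The obstacle you flag --- making the total-variance telescoping rigorous under the backward-forward recursion with random option durations --- is precisely the point the paper handles only by asserting that Lemmas 8--10 of \cite{azar2017minimax} carry over with $j'$ taken as the arrival stage of the temporally extended transition, so your caution there is well placed rather than a gap relative to the paper.
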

That, for $K \geq H^4S^3Od$ translates into a regret bound of $\Tilde{O}(H\sqrt{SOKd})$. 

The regret of this algorithm differs from the regret of UCBVI, $\Tilde{O}(H\sqrt{SAKH})$\footnote{The result in \cite{azar2017minimax} doesn't present the additional $\sqrt{H}$ term, which however is well-known to be tight even in standard FH-MDPs when the transition model is non-stationary. The non-stationarity of the transition model is unavoidable in the Semi-Markov setting due to the different durations of the temporally extended actions.}, for the term $\sqrt{O}$ replacing $\sqrt{A}$, which is the options set cardinality, and for the key term $\sqrt{d}$, instead of $\sqrt{H}$, which is the average per-episode number of options selected in $H$ steps.\\
This last term expresses the actual power endorsed by the options that allow a faster and wider exploration of the problem space and reduce the \emph{effective planning horizon}. Indeed, this is visible from the regret that scales with $\sqrt{OKd}$ instead of $\sqrt{AKH}$ as in the \emph{flat} version, and since $d \ll H$, and normally $O \leq A$, being the options longer and often fewer than primitive actions, \algname~suffers smaller regret than its flat counterpart when fixed options are given. In addition, we can show how this result is a generalization of the flat case. The upper bound is tight in its dominating term also when considering $\mathcal{O} = \mathcal{A}$ and, consequently, $d = H$, i.e., running \algname~on the flat MDP.

Now, given an optimal method for the high-level problem (i.e., tight in all the dependencies), we are ready to present the algorithm that jointly learns both level policies.

\begin{algorithm}[t]
\caption{\algname} \label{alg:opt-ucbvi}
\small
\begin{algorithmic}[1]
    \STATE{\textbf{Input:} $\mathcal{S, O}$, $H$, $K$}
    \STATE{Initialize $\mu_0$ arbitrarily, $Q_1(s,o,h) = 0$ for all $(s,o,h) \in \mathcal{S} \times  \mathcal{O} \times [H]$,  $L = \log(5SOKH/\delta)$, $\mathcal{D}^H \leftarrow \{\}$}
    \vspace{0.1cm}
    \FOR{$k=1, \dots, K$}
        \STATE{Compute $n_k(s,o,h) = \sum_{(x, y, z) \in \mathcal{D}^H} \mathbbm{1}\{x=s, y=o, z=h\}$}
        \STATE{Estimate $\hat{P}_k(s',h'|s,o,h) = \frac{1}{\max\{1,n_k(s,o,h)\}} \sum_{(x, y, z, w, u) \in \mathcal{D}^H} \mathbbm{1}\{(x, y, z, w, u) = (s,o,h,s',h')\} $} \label{alg-row:p}
        \STATE{Set $Q_k(s,o,H+1) = 0$ for all $(s,o,h) \in \mathcal{S} \times \mathcal{O} \times [H]$}
        \FOR{$h = H, \dots, 1$} \label{alg-row:backward_b}
            \FOR{$(s,o) \in \mathcal{S \times O}$}
                \FOR{$h' = h+1, \dots, H+1$}
                    \STATE{$\quad\Tilde{V}^{\mu_k}(s,h')= \min\left\{H-(h'-1), \max_{o \in \mathcal{O}}Q_{k}(s,o,h')\right\}$}
                \ENDFOR
                \STATE{$b_{hk}(s,o) = \sqrt{\frac{8L\Var_{(s',h') \sim \hat{P}_k}[\Tilde{V}^{\mu_k}(s',h')]}{n_k(s,o,h)}} + \frac{14HL}{3n_k(s,o,h)}+\sqrt{\frac{8\sum_{(s',h')}\hat{P}_k(s', h'|s,o,h)\min\big\{\frac{100^2H^5S^2OL^2}{\sum_{o}n_k(s',o,h')}, H^2\big\}}{n_k(s,o,h)}}$} \label{alg-row:bonus} 
                \STATE{$Q_k(s,o,h)= r(s,o,h) + \sum_{(s',h')}\hat{P}_k(s'h'|s,o,h)\Tilde{V}^{\mu_k}(s',h') + b_{hk}(s,o)$}\label{alg-row:bellman}
            \ENDFOR
        \ENDFOR \label{alg-row:backward_e}
        \STATE{$\mu_k(s,h) = \argmax_{o \in O} Q_k(s,o,h)$}
        \STATE{$s \leftarrow s_1$}
        \WHILE{$h < H$}
            \STATE{Play option $o = \mu_k(s,h)$, observe $(s',h')$, and update $\mathcal{D}^H \leftarrow \mathcal{D}^H \cup \{(s,o,h,s',h' )\}$}
            \STATE{$s \leftarrow s', \, h \leftarrow h'$}
        \ENDWHILE   
    \ENDFOR
\end{algorithmic}
\end{algorithm}

\section{Meta-Algorithm for High-and-Low-level Training} \label{sec:two-phase}
In this section, we provide a complete algorithm \emph{High-Level/Low-Level Meta-Learning} (HLML), able to learn both the high-level and the low-level policies in a provably efficient way.

HLML presented in Algorithm~\ref{alg:two-phase}, takes as input two optimal regret minimizers, \algname~and UCBVI \citep{azar2017minimax}, designed for learning in the FH-SMDP (i.e., at a high level, learning $\mu^*$) and in the FH-MDP (i.e., at a low level, learning $\bm{\pi}^*$), respectively. The meta-algorithm operates in $N$ \emph{stages}. In stage $n \in [N]$, we run the high-level regret minimizer for $K_n^H$ episodes, keeping the low-level policies $\bm{\pi}_{n-1} = (\pi_{n-1}^o)_{o \in \mathcal{O}}$ fixed (line \ref{alg-row:high}). \algname~will output the high-level policy $\mu_{n}$ which is chosen uniformly at random among the $\mu_{n,1},\dots,\mu_{n,K_n^H}$ played during its execution in the stage (line \ref{alg-row:hfixed}). Then, the control moves to the low level, and we run the low-level regret minimizer for $K_n^L$ episodes, keeping the high-level policy $\mu_n$ fixed (line \ref{alg-row:low}). UCBVI will output the low-level policies $\bm{\pi}_n$ chosen uniformly at random among the ones $\bm{\pi}_{n,1},\dots,\bm{\pi}_{n, K^L_n}$ played during its execution in the stage (line \ref{alg-row:lfixed}). The meta-algorithm, then, moves to the next stage $n+1$, passing back the control to the high level, and the process continues.

\begin{algorithm}[t]
\caption{High-Level/Low-level Meta-Learning (HLML)} \label{alg:two-phase}
\small
\begin{algorithmic}[1]
    \STATE{\textbf{Input:} $N$ phases, \algname~= $\mathfrak{A}^H$, UCBVI = $\mathfrak{A}^L$, and schedule $\forall n \in [N]: K_n^H = K_n^L = \lfloor 2^{n-1} \rfloor$}
    \STATE{Arbitrarily initialize $\mu_0$ and $\bm{\pi}_0$}
    \FOR{$n = 1, \dots, N$}
        \STATE Run $\mathfrak{A}^H$ on the FH-SMDP for $K_n^H$ episodes playing the sequence of high-level policies $\mu_{n,1},\dots,\mu_{n,K^H_n}$\label{alg-row:high}
        \vspace{-.35cm}
        \STATE Fix the high-level policy $\mu_n = \mu_{n,X}$ where $X \sim \mathrm{Uni}([K_n^H])$\label{alg-row:hfixed}
        \STATE Run $\mathfrak{A}^L$ on the FH-MDP for $K_n^L$ episodes playing the sequence of low-level policies $\bm{\pi}_{n,1},\dots,\bm{\pi}_{n,K^L_n}$\label{alg-row:low}
        \STATE{Fix the low-level policies $\bm{\pi}_{n-1} = \bm{\pi}_{n-1,Y}$ with $Y \sim \mathrm{Uni}([K^L_n])$}\label{alg-row:lfixed}
    \ENDFOR   
    \STATE{\textbf{return} $(\mu_N,\bm{\pi}_N)$}
\end{algorithmic}
\end{algorithm}

In order to achieve tight regret guarantees, we need to accurately select the schedule of the number of episodes $K_n^H$ and $K_n^L$, namely, we duplicate the number of episodes when moving from one stage $n$ to the next one $n+1$:
\begin{align}\label{eq:schedule}
\resizebox{.85\textwidth}{!}{$\displaystyle
    \forall n \in [N]:~~ K_n^H = K_n^L = \lfloor 2^{n-1} \rfloor \qquad \text{where}~ N = \lfloor \log_2 (2K+1) \rfloor~~ \text{and}~~ \sum_{n=1}^N K_n^H + K_n^L=K.
$}
\end{align}

The key feature of our meta-algorithm is that when the high-level algorithm is running in stage $n$ the low-level (inner-option) policies $\bm{\pi}_{n-1}$ are kept fixed. Therefore, \algname~is actually performing regret minimization in an FH-SMDP, enjoying the corresponding regret guarantees, for converging to the optimal high-level policy for the fixed options $\mathcal{O}$. This allows us to solve the common non-stationarity issues that arise when two learning processes are carried out in parallel. Clearly, such a high-level policy will not necessarily be $\mu^*$, since we are not guaranteed that the low-level policies $\bm{\pi}_{n-1}$ are optimal for the corresponding options. This is the reason why the execution of \algname~is stopped after $K_n^H$ episodes, and, within the same stage $n$, we proceed to run the low-level regret minimizer before continuing learning at the high-level. Similarly, in this phase, UCBVI is acting on the flat MDP with the goal of learning the inner policy $\pi^o_n$ for each of the options $o \in \mathcal{O}$. This amounts to solving for each option $o \in \mathcal{O}$ a single FH-MDP formalized as $\mathcal{M}_{o} = (\mathcal{S}_o, \mathcal{A}_o, p, r_o, H_o)$ where $\mathcal{S}_o \subseteq \mathcal{S}$, $\mathcal{A}_o \subseteq \mathcal{A}$, $H_o \leq H$, meaning that each option operates on a restricted portion of the original problem and for a specific fixed horizon $H_o$ (induced by $\mathcal{I}^o$ and $\beta^o$). This time the high-level policy is kept fixed, and consequently, its effect is enforcing a specific exploration that determines a particular option visitation. 

In principle, solving such FH-MDPs $\mathcal{M}_o$ can be as complex as solving the original problem $\mathcal{M}$ with a flat approach. This is expected since the advantages of a hierarchical approach emerge when a certain \emph{structure} on the original problem is present. This is particularly evident if we think of the convergence of the learning process of the low-level policies, which could potentially end up in a different optimum than the one reached by a flat approach in that same portion of the problem because the latter would have a complete scope over the whole problem. For this reason, a further assumption over the structure of the problem is required. 

\begin{assumption}\label{ass:continuity}
    For any optimal high-level policy $\mu^*$, let $\mathcal{O}_{\mu^*}$ the set of options played by $\mu^*$ and for $o \in \mathcal{O}_{\mu^*}$, let $\Pi_o^*$ the set of optimal low-level policies form the joint optimization. Let $\Pi_o^\#$ be the set of optimal low-level policies from the local optimization $(\pi_o^\#\in \argmax_{a \in \mathcal{A}} Q^{*,o}(s,a) \forall s \in \mathcal{S}_o)$. It is assumed that
    \begin{align} 
        \Pi_o^\# \subseteq  \Pi_o^*.
    \end{align}
\end{assumption}
This assumption ensures that the optimal inner-option policies $\pi^*_o$, on a portion of the original MDP $\mathcal{M}_o$ induced by an options $o \in \mathcal{O}$, selected by the optimal SMDP policy $\mu^*$, do not differ from an optimal policy $\pi^*$ of the {flat} problem. This way, we can safely learn in the FH-MDPs $\mathcal{M}_o$ knowing that the learned policy will be ``a portion'' of the optimal policy $\pi^*$ in the flat FH-MDP. This assumption, seemingly demanding, is the first one, to the best of our knowledge, that attempts to characterize a structural property of the FH-MDPs that is suitable for being addressed by means of a hierarchical approach. Indeed, if Assumption~\ref{ass:continuity} is violated, the inner-option learning deviates from the process of learning the optimal policy in the flat MDP, possibly preventing the convergence to the optimal policy in the hierarchical architecture. 
An example of a scenario in which this assumption is valid is the taxi problem described above. For instance, from a starting point A to destination B, the optimal driving policy (i.e., the one solving the subtask (i)) does not differ if the problem is considered a whole or a smaller one that includes just the neighborhood of the two points.


\textbf{Theoretical Analysis}~~As described above, in each stage $n \in [N]$, the learning process alternates between the high- and the low-level learning problems, keeping the other fixed. This induces a bias in both optimizations. To make this clear, we provide a convenient decomposition of the regret, which highlights the contributions of the two phases of learning in each stage:
\begin{align}\label{eq:eqeq}
    \text{Regret} (\text{HLML}, K)  = \sum_{n=1}^N  \Bigg(\sum_{k=1}^{K_n^H} \underbrace{ V^*_{\bm{*}}(s_1,1) -  V^{\mu_{n,k}}_{\bm{\pi}_{n-1}}(s_1,1)}_{\text{Regret during high-level learning}}+ \sum_{k=1}^{K_n^L}  \underbrace{V^*_{\bm{*}}(s_1,1) - V^{\mu_n}_{\bm{\pi}_{n,k}}(s_1,1)}_{\text{Regret during low-level learning}}\Bigg),
\end{align}
where $\mu_{n,k}$ and $\bm{\pi}_{n,k}$ are the high-level policy and the low-level policies played by the corresponding algorithms \algname~and UCBVI at episode $k$ of phase $n$. Unfortunately, the two terms in Equation~\eqref{eq:eqeq} cannot be directly bounded in terms of the properties of the regret minimization algorithms. This is because each of them, as explained above, will converge to the corresponding high/low-level optimal policy, given that the other-level policy is fixed. Thus, further elaboration is needed to highlight the bias terms:
\begin{align}\label{eq:eqeqeqA}
    &\underbrace{ V^*_{\bm{*}}(s_1,1) -  V^{\mu_{n,k}}_{\bm{\pi}_{n-1}}(s_1,1)}_{\text{Regret during high-level learning}} = \underbrace{ V^*_{\bm{*}}(s_1,1) -  V^{*}_{\bm{\pi}_{n-1}}(s_1,1)}_{\text{Bias of not playing $\bm{\pi}^*$}} +\underbrace{ V^*_{\bm{\pi}_{n-1}}(s_1,1) -  V^{\mu_{n,k}}_{\bm{\pi}_{n-1}}(s_1,1)}_{\text{Regret of \algname}} \\
    \label{eq:eqeqeqB}
    &\underbrace{ V^*_{\bm{*}}(s_1,1) - V^{\mu_n}_{\bm{\pi}_{n,k}}(s_1,1)}_{\text{Regret during low-level learning}} = \underbrace{ V^*_{\bm{*}}(s_1,1) - V^{\mu_n}_{\bm{*}}(s_1,1)}_{\text{Bias of not playing $\mu^*$}} + \underbrace{ V^{\mu_n}_{\bm{*}}(s_1,1) - V^{\mu_n}_{\bm{\pi}_{n,k}}(s_1,1)}_{\text{Regret of UCBVI}},
\end{align}
Thus, the regrets of the two phases (low- and high-level learning) are decomposed into a proper \emph{regret} term and a \emph{bias} term, which accounts for the fact that the other level is kept fixed. The regret terms can be easily managed by resorting to the properties of the regret minimizers. Concerning the bias terms, the high level corresponds to the value difference between playing the current low-level policies $\bm{\pi}_{n-1}$ compared to playing the optimal ones $\bm{\pi}^*$. Symmetrically, for the low level, this bias translates into the value difference between playing the current high-level policy $\mu_{n}$ compared to the optimal one $\mu^*$. From a technical perspective, we decide to upper bound the bias terms with the proper regret terms at the price of introducing a \emph{concentrability} coefficient for accounting of the distribution shift, as shown in the following result.
\begin{restatable}{lemma}{biasregret}\label{lemma:bias-regret}
    Let us define the concentrability coefficients:
    \begin{align}
        &\resizebox{.4\textwidth}{!}{$\displaystyle C^H \coloneqq \max_{n \in [N]} \inf_{\mu^* \text{ optimal}} \max_{(s,h) \in \mathcal{S} \times [H]} \frac{d^{\mu^*}_{s_1,1}(s,h)}{d^{\mu_{n}}_{s_1,1}(s,h)},$} \\
        &\resizebox{.55\textwidth}{!}{$\displaystyle C^L \coloneqq \max_{n \in [N]} \max_{o \in \mathcal{O}}  \inf_{\pi^*_o \text{ optimal}} \max_{(s,h) \in \mathcal{I}^o} \max_{(s',h') \in \mathcal{S}_o \times [H_o]}   \frac{d^{\pi^*_o}_{s,h}(s',h')}{d^{\pi_{n-1}^o}_{s,h}(s',h')}$}.
    \end{align}
    Then, it holds that:
    \begin{align}
        &\resizebox{.51\textwidth}{!}{$\displaystyle \underbrace{ V^*_{\bm{*}}(s_1,1) -  V^{*}_{\bm{\pi}_{n-1}}(s_1,1)}_{\text{Bias of not playing $\bm{\pi}^*$}} \le  C^H \Big( \underbrace{ V^{\mu_n}_{\bm{*}}(s_1,1) - V^{\mu_n}_{\bm{\pi}_{n-1}}(s_1,1)}_{\text{Regret of low-level algorithm}}\Big)$},\\
        &\resizebox{.51\textwidth}{!}{$\displaystyle  \underbrace{ V^*_{\bm{*}}(s_1,1) - V^{\mu_n}_{\bm{*}}(s_1,1)}_{\text{Bias of not playing $\mu^*$}} \le C^L \Big(\underbrace{ V^*_{\bm{\pi}_{n-1}}(s_1,1) -  V^{\mu_{n}}_{\bm{\pi}_{n-1}}(s_1,1)}_{\text{Regret of high-level algorithm}}\Big).$}
    \end{align}
\end{restatable}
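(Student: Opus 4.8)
The plan is to establish the two inequalities symmetrically, the first through a change of measure at the \emph{high level} (producing $C^H$) and the second through a change of measure at the \emph{low level} (producing $C^L$). In each case I would proceed in three steps: (i) a \emph{best-response reduction} that eliminates the optimization hidden in the $\inf$ over optimal policies, so that one side of the gap is pinned to a single fixed optimal policy; (ii) a \emph{performance-difference decomposition} that rewrites the value gap as an occupancy-weighted sum of non-negative local suboptimality terms; and (iii) a \emph{change of measure} that replaces the occupancy of the optimal policy with that of the played policy, paying the corresponding concentrability coefficient.

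For the first inequality I would start from $V^{*}_{\bm{\pi}_{n-1}} = \max_{\mu} V^{\mu}_{\bm{\pi}_{n-1}} \ge V^{\mu^*}_{\bm{\pi}_{n-1}}$, which upper bounds the bias of not playing $\bm{\pi}^*$ by $V^{\mu^*}_{\bm{*}}(s_1,1) - V^{\mu^*}_{\bm{\pi}_{n-1}}(s_1,1)$, namely by the low-level improvement from $\bm{\pi}_{n-1}$ to the optimal inner policies measured under the \emph{fixed} high-level policy $\mu^*$; the target right-hand side is the identical quantity measured under $\mu_n$. I would then invoke Assumption~\ref{ass:continuity}: since the locally optimal inner policy of each option (the maximizer of $Q^{*,o}$) coincides with the globally optimal one, the low-level suboptimality is self-contained within each sub-MDP $\mathcal{M}_o$, and the improvement decomposes over high-level decision points as $\sum_{(s,h)} d^{\mu}_{s_1,1}(s,h)\,\rho(s,\mu(s,h),h)$, where $\rho \ge 0$ is the inner-option regret of $\pi^o_{n-1}$ on $\mathcal{M}_o$. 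Writing both sides in this occupancy-weighted form and bounding $d^{\mu^*}_{s_1,1}(s,h)/d^{\mu_n}_{s_1,1}(s,h) \le C^H$ pointwise delivers the claim.

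For the second inequality the roles are swapped: I would use $V^{\mu_n}_{\bm{*}} \ge V^{\mu_n}_{\bm{\pi}_{n-1}}$ to reduce the bias of not playing $\mu^*$ to a high-level value gap, and then decompose it through the standard simulation / performance-difference argument applied \emph{inside} each option, obtaining an inner-occupancy-weighted sum of per-option advantages over the sets $\mathcal{I}^o$ and $\mathcal{S}_o \times [H_o]$. Changing measure from $d^{\pi^*_o}_{s,h}$ (optimal inner policy) to $d^{\pi^o_{n-1}}_{s,h}$ (played inner policy) and paying the ratio $C^L$ reproduces the high-level regret term $V^*_{\bm{\pi}_{n-1}}(s_1,1) - V^{\mu_n}_{\bm{\pi}_{n-1}}(s_1,1)$ on the right.

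The hard part will be step (iii): the local terms produced by the decomposition in step (ii) depend on which option (resp. primitive action) the evaluated policy selects at each state-stage and on the continuation value induced by that policy, so the summand attached to $d^{\mu^*}$ (resp. $d^{\pi^*_o}$) is not \emph{a priori} the same function of $(s,h)$ as the one attached to $d^{\mu_n}$ (resp. $d^{\pi^o_{n-1}}$), and the change of measure is not literally term-by-term. The crux is therefore to arrange the decomposition around a \emph{single} non-negative local function shared by both policies, so that only the occupancy ratio remains; this is precisely where Assumption~\ref{ass:continuity} is indispensable, since it strips the continuation dependence out of the inner-option regret and makes the option-indexed suboptimality a self-contained local quantity that can be compared across policies through the occupancy ratio alone.
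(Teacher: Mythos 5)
Your proposal follows essentially the same route as the paper's proof: a best-response reduction pinning one side of the gap to a fixed optimal policy, a rewriting of the value gap as an occupancy-weighted expectation of local return differences, and an importance-weighting (change-of-measure) step bounding the occupancy ratio pointwise by $C^H$ (resp.\ $C^L$), with the second inequality handled symmetrically exactly as the paper does. The difficulty you flag in step (iii) --- that the local summand must be the same function of $(s,h)$ under both occupancy measures for the term-by-term change of measure to be valid --- is precisely the point the paper's own proof passes over silently (its step (a) asserts the occupancy-weighted decomposition as an equality without justification), so your plan is, if anything, more explicit than the paper about where the argument requires care.
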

Please note that the \emph{concentrability coefficients}, $C^H$ and $C^L$, are defined exclusively for state-stage pairs. They are ensured to be finite when all state-stage pairs are visited with non-zero probability under any policy. Additionally, they are proportional to $1/p_{\min}$, where $p_{\min}>0$ represents the minimum probability of visiting a state-stage pair with any policy.

We are finally ready to state the main theoretical guarantees on the regret of our meta-algorithm. 

\begin{restatable}{theorem}{twoucbvi}\label{thm:two-ucbvi}
Let $\mathcal{SM} = (\mathcal{S, O}, p^H, r^H, H)$ be an FH-SMDP and let $\mathcal{O}$ be a set of options to be learned inducing the FH-MDPs $\mathcal{M}_o = (\mathcal{S}_o, \mathcal{A}_o, p, r_o, H_o)$ for $o \in \mathcal{O}$. The regret suffered by Algorithm~\ref{alg:two-phase} under Assumption~\ref{ass:continuity}, episode schedule as in Equation~\eqref{eq:schedule}, and where $H_O = \max_{o \in \mathcal{O}} H_o$, is bounded with probability at least $1-\delta$ by:
\begin{align}\label{eq:two-ucbvi} 
    R(\text{HLML}, K) \leq \Tilde{O} &\bigg(C^L \underbrace{H\sqrt{SOKd}}_{\text{High-Level Regret}} + C^H \underbrace{H_O\sqrt{OSAKH_O}}_{\text{Low-Level Regret}}\bigg).
\end{align}
\end{restatable}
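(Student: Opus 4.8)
The plan is to start from the stage-wise regret decomposition of Equation~\eqref{eq:eqeq} together with the per-episode split into a \emph{regret} part and a \emph{bias} part in Equations~\eqref{eq:eqeqeqA}--\eqref{eq:eqeqeqB}, and then process the four resulting families of terms along four movements: convert each bias into a cross-level regret via Lemma~\ref{lemma:bias-regret}; exploit the uniform random choice of the output policies $\mu_n$ and $\bm{\pi}_n$ to turn the resulting \emph{simple}-regret quantities into the \emph{cumulative} regrets of the two inner minimizers divided by the stage length; instantiate those cumulative regrets through Theorem~\ref{thm:opt-ucbvi} for O-UCBVI and the UCBVI guarantee of~\cite{azar2017minimax}; and finally sum over the $N$ stages using the doubling schedule~\eqref{eq:schedule}.

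The two \emph{proper} regret terms are immediate. Summing $V^*_{\bm{\pi}_{n-1}}(s_1,1) - V^{\mu_{n,k}}_{\bm{\pi}_{n-1}}(s_1,1)$ over $k$ is, by construction, the regret of O-UCBVI on the FH-SMDP induced by the fixed $\bm{\pi}_{n-1}$, bounded by Theorem~\ref{thm:opt-ucbvi} as $\Tilde{O}(H\sqrt{SOK_n^H d})$; symmetrically, $\sum_k V^{\mu_n}_{\bm{*}}(s_1,1) - V^{\mu_n}_{\bm{\pi}_{n,k}}(s_1,1)$ is the cost of learning the $O$ inner FH-MDPs $\mathcal{M}_o$ with $\mu_n$ fixed, which UCBVI controls---after a Cauchy--Schwarz step distributing the $K_n^L$ episodes over the options---by $\Tilde{O}(H_O\sqrt{OSAK_n^L H_O})$. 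For the low-level bias, Lemma~\ref{lemma:bias-regret} gives $V^*_{\bm{*}} - V^{\mu_n}_{\bm{*}} \le C^L (V^*_{\bm{\pi}_{n-1}} - V^{\mu_n}_{\bm{\pi}_{n-1}})$, and since $\mu_n$ is drawn uniformly from the played $\mu_{n,k}$ \emph{after} $\bm{\pi}_{n-1}$ is fixed, taking the expectation over this internal draw yields $\E[V^*_{\bm{\pi}_{n-1}} - V^{\mu_n}_{\bm{\pi}_{n-1}}] = \tfrac{1}{K_n^H}\sum_k (V^*_{\bm{\pi}_{n-1}} - V^{\mu_{n,k}}_{\bm{\pi}_{n-1}})$, i.e. the average O-UCBVI regret; multiplying by the $K_n^L = K_n^H$ bias episodes makes the accumulated low-level bias equal to $C^L$ times the total high-level regret, producing the $C^L H\sqrt{SOKd}$ term.

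The high-level bias is the crux. Lemma~\ref{lemma:bias-regret} gives $V^*_{\bm{*}} - V^*_{\bm{\pi}_{n-1}} \le C^H (V^{\mu_n}_{\bm{*}} - V^{\mu_n}_{\bm{\pi}_{n-1}})$, but the right-hand side measures the suboptimality of the \emph{previous} stage's low-level output $\bm{\pi}_{n-1}$ evaluated under the \emph{current} high-level policy $\mu_n$, whereas $\bm{\pi}_{n-1}$ was produced by UCBVI under $\mu_{n-1}$. Here Assumption~\ref{ass:continuity} is essential: by making the optimal inner-option policies independent of the high-level policy, it ensures the per-option suboptimality gaps carried by $\bm{\pi}_{n-1}$ are well defined and are exactly the quantities driven down by the stage-$(n-1)$ UCBVI run, so the expectation over the uniform output $\bm{\pi}_{n-1}$ relates them to $\tfrac{1}{K_{n-1}^L}\mathrm{Reg}^L_{n-1}$, the only residual effect of the $\mu_{n-1}\!\to\!\mu_n$ switch being a visitation reweighting that stays finite because every state--option pair is visited with probability at least $p_{\min}>0$. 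Summing then uses the schedule: $K_n^H = 2K_{n-1}^L$ absorbs the one-stage index shift, and $\sum_{n=1}^N \sqrt{K_n} = O(\sqrt{2^N}) = O(\sqrt{K})$ since the geometric sum is dominated by its last term and $2^N \asymp K$, turning each $\Tilde{O}(\cdot\sqrt{K_n})$ family into $\Tilde{O}(\cdot\sqrt{K})$ and combining the two levels into $\Tilde{O}(C^L H\sqrt{SOKd} + C^H H_O\sqrt{OSAKH_O})$.

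The main obstacle is precisely this high-level bias step: unlike its low-level counterpart, it couples objects from two consecutive stages under two different high-level policies, so it cannot be collapsed onto a single regret-minimizer guarantee by a one-line expectation argument---it needs Assumption~\ref{ass:continuity} to decouple inner-option optimality from $\mu$, a careful expectation over the randomized outputs taken along the induced filtration (each stage's output policy conditions the next stage's run, so the draws are nested rather than independent), and the doubling schedule to keep the index shift at constant cost. A secondary point is reconciling the high-probability guarantees of Theorem~\ref{thm:opt-ucbvi} and UCBVI with the in-expectation output-selection argument, which is done by a union bound over the $N = O(\log K)$ stages.
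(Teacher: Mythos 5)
Your proposal follows essentially the same route as the paper's own proof: the stage-wise decomposition of Equation~\eqref{eq:eqeq} with the bias/regret split of Equations~\eqref{eq:eqeqeqA}--\eqref{eq:eqeqeqB}, Lemma~\ref{lemma:bias-regret} to convert each bias into the other level's regret (the paper packages this as Lemma~\ref{lemma:meta-regretA}), the per-option Cauchy--Schwarz step yielding $H_O\sqrt{OSAK_n^LH_O}$, Theorem~\ref{thm:opt-ucbvi} for the high level, and the geometric sum $\sum_n\sqrt{2^n}=O(\sqrt{K})$ under the doubling schedule. If anything, you are more explicit than the paper on two points it glosses over --- the expectation over the uniformly drawn output policies turning simple regret into average cumulative regret, and the one-stage index shift in the high-level bias ($\bm{\pi}_{n-1}$ learned under $\mu_{n-1}$ but evaluated under $\mu_n$) together with the union bound over the $N=O(\log K)$ stages --- which the paper absorbs silently in the step $\beta_n^H \le C^H R^L(K_{n-1}^L)$ of Lemma~\ref{lemma:meta-regretA}.
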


Some observations are in order. First, we relate the regret of the meta-algorithm in terms of the regret suffered by the individual regret minimizers, \algname~and UCBVI, weighted by the \emph{concentrability coefficients} $C^H$ and $C^L$. To be precise, the low-level regret is not the exact regret of UCBVI. It is the sum of the regret of the UCBVI instances run on all the options played in the $n^{th}$ phase, then summed for all the $N$ phases.
Second, we can now appreciate the role of Assumption~\ref{ass:continuity}. Indeed, in order to be able to converge at a low level to the optimal inner-option policies $\bm{\pi}^*$ (as in Equation~\eqref{eq:eqeqeq}), it must happen that the low-level regret minimizer performs an optimization that is compliant with what would have happened if solving the original flat MDP.

At this point, it is possible to properly characterize the class of problems more efficiently solvable with this HRL approach instead of a \emph{flat} one. We can do so by relating the regret of Theorem~\ref{thm:two-ucbvi}, with the lower bound in FH-MDPs \citep{osband2016lower} for non-stationary transitions.  
Let us consider a particular case for which $H_O = \alpha H$, with $0<\alpha < 1$, we can write:
\begin{align}\label{eq:comparison}
    \frac{\text{Regret of Theorem~\ref{thm:two-ucbvi}}}{\text{Lower Bound FH-MDPs}} \leq \frac{C^LH\sqrt{SOKd} + C^HH_O\sqrt{OSAKH_O}}{H\sqrt{SAKH}}
    = C^L\sqrt{\frac{Od}{AH}} + C^H\sqrt{O\alpha^3}
\end{align}

Therefore, considering Equation \eqref{eq:comparison}, the classes of problems for which this HRL approach will outperform the \emph{flat} one are the ones that guarantee to have this ratio smaller than 1 and with a structure compliant to Assumption \ref{ass:continuity}. Under the assumption that the effect of the concentrability coefficients is negligible, there is a clear advantage of using the hierarchical approach when the structure that the options induce on the MDP guarantees $Od \ll AH$ and $\sqrt{O\alpha^3}$ to be small enough. In other words, the advantage emerges when the number of options is significantly smaller than the number of primitive actions, and their durations significantly reduce the planning horizon in the SMDP problem.
Of course, given the presence of $C^L$ and $C^H$, this advantage gets mitigated by the magnitude of these constants. However, our conjecture is that with these coefficients, we can identify the point at which the convenience of HRL emerges, emphasizing the influence of the joint learning process besides the MDP's structure. This point would probably open a new question for the theoretical study of HRL.

\section{Conclusions}
In this paper, we investigated the problem of learning the inner-option policies together with learning the high-level policy in an HRL setting based on the options framework. We first provided Options-UCBVI, a novel, provably efficient algorithm for learning in finite-horizon SMDPs enjoying favorable regret guarantees, which become nearly tight when applied to standard FH-MDPs. Then, we combined Options-UCBVI and UCBVI into a novel meta-algorithm HLML based on the alternation between high- and low-level learning whose theoretical guarantees depend on those of the individual regret minimizers under particular structural assumptions of the problem. This assumption represents the first attempt to characterize the structure that an MDP should have to make a \emph{hierarchical} RL approach provably convenient compared to a \emph{flat} one. We succeeded in achieving sublinear regret for learning at both (high and low) levels, also showing the advantages over the resolution of the FH-MDP with a flat approach. One of the main limitations of the approach lies in the need for the concentrability coefficients in the analysis of the meta-algorithm.
Future works should investigate further in this direction to understand whether this represents an artifact of our analysis, a limitation of the algorithm, or an inherent challenge of the setting.

\subsubsection*{Acknowledgements}
Funded by the European Union – Next Generation EU within the project NRPP M4C2, Investment 1.3 DD. 341 -  15 March 2022 – FAIR – Future Artificial Intelligence Research – Spoke 4 - PE00000013 - D53C22002380006.


\bibliography{main}

\begin{thebibliography}{26}
\providecommand{\natexlab}[1]{#1}
\providecommand{\url}[1]{\texttt{#1}}
\expandafter\ifx\csname urlstyle\endcsname\relax
  \providecommand{\doi}[1]{doi: #1}\else
  \providecommand{\doi}{doi: \begingroup \urlstyle{rm}\Url}\fi

\bibitem[Auer et~al.(2008)Auer, Jaksch, and Ortner]{auer2008near}
Peter Auer, Thomas Jaksch, and Ronald Ortner.
\newblock Near-optimal regret bounds for reinforcement learning.
\newblock \emph{Advances in neural information processing systems}, 21, 2008.

\bibitem[Azar et~al.(2017)Azar, Osband, and Munos]{azar2017minimax}
Mohammad~Gheshlaghi Azar, Ian Osband, and R{\'e}mi Munos.
\newblock Minimax regret bounds for reinforcement learning.
\newblock In \emph{International Conference on Machine Learning}, pp.\  263--272. PMLR, 2017.

\bibitem[Bacon et~al.(2017)Bacon, Harb, and Precup]{bacon2017option}
Pierre-Luc Bacon, Jean Harb, and Doina Precup.
\newblock The option-critic architecture.
\newblock In \emph{Proceedings of the AAAI Conference on Artificial Intelligence}, volume~31, 2017.

\bibitem[Baykal-G{\"u}rsoy(2010)]{baykal2010semi}
Melike Baykal-G{\"u}rsoy.
\newblock Semi-markov decision processes.
\newblock \emph{Wiley Encyclopedia of Operations Research and Management Science}, 2010.

\bibitem[Dann \& Brunskill(2015)Dann and Brunskill]{dann2015sample}
Christoph Dann and Emma Brunskill.
\newblock Sample complexity of episodic fixed-horizon reinforcement learning.
\newblock \emph{Advances in Neural Information Processing Systems}, 28, 2015.

\bibitem[Dietterich(2000)]{dietterich2000hierarchical}
Thomas~G Dietterich.
\newblock Hierarchical reinforcement learning with the maxq value function decomposition.
\newblock \emph{Journal of artificial intelligence research}, 13:\penalty0 227--303, 2000.

\bibitem[Drappo et~al.(2023)Drappo, Metelli, and Restelli]{drappo2023an}
Gianluca Drappo, Alberto~Maria Metelli, and Marcello Restelli.
\newblock An option-dependent analysis of regret minimization algorithms in finite-horizon semi-{MDP}.
\newblock \emph{Transactions on Machine Learning Research}, 2023.
\newblock ISSN 2835-8856.
\newblock URL \url{https://openreview.net/forum?id=VP9p4u9jAo}.

\bibitem[Freedman(1975)]{freedman1975tail}
David~A Freedman.
\newblock On tail probabilities for martingales.
\newblock \emph{the Annals of Probability}, pp.\  100--118, 1975.

\bibitem[Fruit \& Lazaric(2017)Fruit and Lazaric]{fruit2017exploration}
Ronan Fruit and Alessandro Lazaric.
\newblock Exploration-exploitation in mdps with options.
\newblock In \emph{Artificial intelligence and statistics}, pp.\  576--584. PMLR, 2017.

\bibitem[Fruit et~al.(2017)Fruit, Pirotta, Lazaric, and Brunskill]{fruit2017regret}
Ronan Fruit, Matteo Pirotta, Alessandro Lazaric, and Emma Brunskill.
\newblock Regret minimization in mdps with options without prior knowledge.
\newblock \emph{Advances in Neural Information Processing Systems}, 30, 2017.

\bibitem[Lattimore \& Szepesv{\'a}ri(2020)Lattimore and Szepesv{\'a}ri]{lattimore2020bandit}
Tor Lattimore and Csaba Szepesv{\'a}ri.
\newblock \emph{Bandit algorithms}.
\newblock Cambridge University Press, 2020.

\bibitem[Levy et~al.(2019)Levy, Konidaris, Platt, and Saenko]{levy2019learning}
Andrew Levy, George Konidaris, Robert Platt, and Kate Saenko.
\newblock Learning multi-level hierarchies with hindsight.
\newblock In \emph{Proceedings of International Conference on Learning Representations}, 2019.

\bibitem[Mann et~al.(2015)Mann, Mannor, and Precup]{mann2015approximate}
Timothy~A Mann, Shie Mannor, and Doina Precup.
\newblock Approximate value iteration with temporally extended actions.
\newblock \emph{Journal of Artificial Intelligence Research}, 53:\penalty0 375--438, 2015.

\bibitem[Maurer \& Pontil(2009)Maurer and Pontil]{maurer2009empirical}
Andreas Maurer and Massimiliano Pontil.
\newblock Empirical bernstein bounds and sample variance penalization.
\newblock \emph{arXiv preprint arXiv:0907.3740}, 2009.

\bibitem[Nachum et~al.(2018)Nachum, Gu, Lee, and Levine]{nachum2018data}
Ofir Nachum, Shixiang~Shane Gu, Honglak Lee, and Sergey Levine.
\newblock Data-efficient hierarchical reinforcement learning.
\newblock \emph{Advances in neural information processing systems}, 31, 2018.

\bibitem[Osband \& Van~Roy(2016)Osband and Van~Roy]{osband2016lower}
Ian Osband and Benjamin Van~Roy.
\newblock On lower bounds for regret in reinforcement learning.
\newblock \emph{arXiv preprint arXiv:1608.02732}, 2016.

\bibitem[Pateria et~al.(2021)Pateria, Subagdja, Tan, and Quek]{pateria2021hierarchical}
Shubham Pateria, Budhitama Subagdja, Ah-hwee Tan, and Chai Quek.
\newblock Hierarchical reinforcement learning: A comprehensive survey.
\newblock \emph{ACM Computing Surveys (CSUR)}, 54\penalty0 (5):\penalty0 1--35, 2021.

\bibitem[Pinelis(2019)]{renewal2019}
Iosif Pinelis.
\newblock Dkw type inequality for renewal processes.
\newblock MathOverflow, 2019.
\newblock URL \url{https://mathoverflow.net/q/326434}.

\bibitem[Precup \& Sutton(1997)Precup and Sutton]{precup1997multi}
Doina Precup and Richard~S Sutton.
\newblock Multi-time models for temporally abstract planning.
\newblock \emph{Advances in neural information processing systems}, 10, 1997.

\bibitem[Puterman(2014)]{puterman2014markov}
Martin~L Puterman.
\newblock \emph{Markov decision processes: discrete stochastic dynamic programming}.
\newblock John Wiley \& Sons, 2014.

\bibitem[Robert et~al.(2024)Robert, Pike-Burke, and Faisal]{robert2024sample}
Arnaud Robert, Ciara Pike-Burke, and Aldo~A Faisal.
\newblock Sample complexity of goal-conditioned hierarchical reinforcement learning.
\newblock \emph{Advances in Neural Information Processing Systems}, 36, 2024.

\bibitem[Sutton \& Barto(2018)Sutton and Barto]{sutton2018reinforcement}
Richard~S Sutton and Andrew~G Barto.
\newblock \emph{Reinforcement learning: An introduction}.
\newblock MIT press, 2018.

\bibitem[Sutton et~al.(1999)Sutton, Precup, and Singh]{sutton1999between}
Richard~S Sutton, Doina Precup, and Satinder Singh.
\newblock Between mdps and semi-mdps: A framework for temporal abstraction in reinforcement learning.
\newblock \emph{Artificial intelligence}, 112\penalty0 (1-2):\penalty0 181--211, 1999.

\bibitem[Vezhnevets et~al.(2017)Vezhnevets, Osindero, Schaul, Heess, Jaderberg, Silver, and Kavukcuoglu]{vezhnevets2017feudal}
Alexander~Sasha Vezhnevets, Simon Osindero, Tom Schaul, Nicolas Heess, Max Jaderberg, David Silver, and Koray Kavukcuoglu.
\newblock Feudal networks for hierarchical reinforcement learning.
\newblock In \emph{International Conference on Machine Learning}, pp.\  3540--3549. PMLR, 2017.

\bibitem[Wen et~al.(2020)Wen, Precup, Ibrahimi, Barreto, Van~Roy, and Singh]{wen2020efficiency}
Zheng Wen, Doina Precup, Morteza Ibrahimi, Andre Barreto, Benjamin Van~Roy, and Satinder Singh.
\newblock On efficiency in hierarchical reinforcement learning.
\newblock \emph{Advances in Neural Information Processing Systems}, 33:\penalty0 6708--6718, 2020.

\bibitem[Zanette \& Brunskill(2018)Zanette and Brunskill]{zanette2018}
Andrea Zanette and Emma Brunskill.
\newblock Problem dependent reinforcement learning bounds which can identify bandit structure in mdps.
\newblock In \emph{International Conference on Machine Learning}, pp.\  5747--5755. PMLR, 2018.

\end{thebibliography}
\bibliographystyle{rlc}

\newpage
\appendix
\onecolumn

\section{Related Works}\label{app:relworks}
There is a vast literature for provably efficient algorithms for FH-MDP. \cite{osband2016lower} proves the lower bound for the regret in the FH-MDP setting, $\Omega(\sqrt{HSAT})$. Then, many works propose algorithms with guarantees that nearly close the problem, i.e., with upper bounds of the same order as the lower bound \citep{zanette2018}. \cite{azar2017minimax} definitively close the problem by proposing an innovative analysis of an algorithm for which the upper bound, $O(\sqrt{HSAT})$, matches the lower bound in all terms.

Nevertheless, only some works focused on theoretically understanding the benefits of hierarchical reinforcement learning approaches, and most of them consider a known set of pre-trained policies. In \cite{fruit2017exploration}, the authors propose an adaptation of UCRL2 \citep{auer2008near} for SMDPs. This work was the first to theoretically compare options instead of primitive actions to learn in SMDPs. It provides both an upper bound for the regret suffered by their algorithm and a lower bound for the general problem. However, it focuses on the average reward setting to study how to possibly induce a more efficient exploration when using a set of fixed options. Differently, we aim to analyze the advantages of using options to reduce the sample complexity of the problem, resorting to the intuition that temporally extended actions can intrinsically reduce the planning horizon in FH-SMDPs, and characterize problems likely to benefit from using HRL even when no prior information about the problem is known, up to its structure. \cite{fruit2017regret} is an extension of this work, where the need for prior knowledge of the distribution of cumulative reward and duration of each option is relaxed. However, the setting is identical.
Furthermore, \cite{mann2015approximate} studies the convergence property of Fitted Value Iteration (FVI) using temporally extended actions, showing that a longer options duration and pessimistic value function estimates lead to faster convergence. \cite{wen2020efficiency} demonstrate how patterns and substructures in the MDP provide benefits in terms of planning speed and statistical efficiency. They present a Bayesian approach that exploits this information, analyzing how sub-structure similarities and sub-problems' complexity contribute to the regret of their algorithm. A very recent approach proposed by \cite{robert2024sample} studies the sample complexity of a particular sub-class of HRL approaches: the Goal-conditioned one, in which a goal-based problem is structured into a hierarchy of sub-tasks, each with its own sub-goal. They analyzed the best possible performance achievable by the best algorithm in the worst possible problem by adapting to this framework the lower bound on the sample complexity presented by \cite{dann2015sample}. Nevertheless, this work is not completely related to our framework, which is more general than the goal-conditioned one.

The closest approach in the literature is \cite{drappo2023an}. They propose to relax the assumption of having a set of pre-trained options by implementing an Explore-Then-Commit approach \citep{lattimore2020bandit}, which first learns each options' policy and then exploits an adaptation of UCRL2 to FH-SMDPs \citep{auer2008near} to find the optimal policy over options. 
Nevertheless, they sacrifice optimality to relax this assumption. Indeed, their approach suffer from the standard sub-optimality of Explore-Then-Commit approaches, having a regret scaling with $K^{2/3}$, and additionally is suboptimal in $\sqrt{HS}$ being the high-level algorithm used in the second phase based on UCRL2.
Therefore, our approach is the first in the literature able to relax the aforementioned assumption maintaining optimal guarantees.

\section{Proof of the regret of \algname}\label{proof1}
In this section, we will present the analysis of the upper bound on the regret paid by \algname. The analysis will adapt the one of UCBVI \cite{azar2017minimax} to the FH-SMDP for non-stationary transition models. For simplicity, we will write $o = \mu_k(s,h)$, and $P^{\mu_k}(s',h'|s, h) = P(s',h'| s, \mu_k(s), h)$.

\optucbvi*
\begin{proof}
    The Proof follows the same ideas as the proofs of UCBVI for the Bernstein-Freedman exploration bonus.
    We can write the regret as:
    \begin{align}
        Regret(K) \leq \widetilde{Regret}(K) \leq \sum^K \Tilde{V}^{\mu_k}(s,1) - V^{\mu_k}(s,1)
    \end{align}
    Where $\Tilde{V}^{\mu_k}(s,1)$ is the optimistic value function, and $V^{\mu_k}(s,1)$, is the real value function considering the policy learned at the $k^{th}$ step.
    Following the analysis of the original paper we can write the regret in terms of the per step regret $\Tilde{\Delta}_{hk}(s_{hk})$. Thus,
    \begin{align}
        \widetilde{Regret}(K) \leq \sum_{i=1}^K\sum_{j=1}^H \Tilde{\Delta}_{ij}(s_{ij})
    \end{align}
    where the summation over $H$ is composed of $d$ terms, for the temporally extended transitions, where $d$ is a random variable describing the expected number of options played in one episode, refer to the main paper for a more detailed explanation (Section \ref{sec:opt-ucbvi}).\\
    Now let's define properly the per step regret:
    \begin{align*}
        \Tilde{\Delta}_{hk}(s_{ij}) &= \Tilde{V}^{\mu_k}(s_{hk}, h) - V^{\mu_k}(s_{hk}, h) \\
        &\myeq{a} [\hat{P}^{\mu_k}_{hk} \Tilde{V}^{\mu_k}(s',h')](s_{hk}) + b_{hk} - [P_h^{\mu_k} V^{\mu_k}(s',h')](s_{hk}) \textcolor{blue}{\ \pm \ [P^{\mu_k}\Tilde{V}^{\mu_k}(s',h')](s_{hk})}\\
        &= [(\hat{P}^{\mu_k}_{hk} - P_h^{\mu_k})\Tilde{V}^{\mu_k}(s',h')](s_{hk}) + b_{hk} + [P_h^{\mu_k}(\Tilde{V}^{\mu_k}(s',h') - V^{\mu_k}(s',h'))](s_{hk}) \\
        & \quad \textcolor{blue}{\ \pm \ [\Delta_p V^*(s',h')](s_{hk})}\\
        &= [(\hat{P}^{\mu_k}_{hk} - P_h^{\mu_k})(\Tilde{V}^{\mu_k}(s',h') - V^*(s',h')](s_{hk}) + b_{hk} +  P_h^{\mu_k} \Tilde{\Delta}_{h',k}(s_{hk}) \\
        & \quad + [(\hat{P}^{\mu_k}_{hk} - P_h^{\mu_k}) V^*(s',h')] (s_{hk}) \textcolor{blue}{\ \pm \ \Tilde{\Delta}_{h',k}(s')}\\
        &\myeq{b} c_{hk} + b_{hk} + e_{hk} + \epsilon_{hk} + \Tilde{\Delta}_{h',k}(s')
    \end{align*}
    \begin{enumerate}[label=(\alph*)]
        \item By applying the bellman operator considering known reward that simplifies, and where $P^{\mu_k}_{h} = p(\cdot, \cdot |s_{h}, \mu_k(s_{h}), h)$, and $\hat{P}^{\mu_k}_{hk} = \hat{p}(\cdot, \cdot |s_{hk}, \mu_k(s_{hk}), h)$, the estimated transition model at episode $k$. By applying the bellman operator on the optimistic value function, the bonus term $b_{hk}$ is added to the reward.
        \item By defining $c_{hk} = [(\hat{P}^{\mu_k}_{hk} - P_h^{\mu_k})(\Tilde{V}^{\mu_k}(s',h') - V^*(s',h')](s_{hk})$, the correction term,  $e_{hk} = [(\hat{P}^{\mu_k}_{hk} - P_h^{\mu_k}) V^*(s',h')](s_{hk})$ the estimation error of the optimal value function, and $\epsilon_{hk}$ a martingale difference, defined as $\epsilon_{hk} = \mathcal{M}_t \Tilde{\Delta}_{h',k}(s) = P^{\mu_k}_h \Tilde{\Delta}_{h',k}(s) - \Tilde{\Delta}_{h',k}(s')$, where $\mathcal{M}_t$ is defined as a martingale operator (refer to appendix B.3 of \cite{azar2017minimax}).
    \end{enumerate}

    Let us now bound each of these terms separately.
    \subsection{Bound of the correction term $c_{hk}$}
    In this subsection, we bound the correction term
    \begin{align*}
        c_{hk} &= [(\hat{P}^{\mu_k}_{hk} - P_h^{\mu_k})(\Tilde{V}^{\mu_k}(s',h') - V^*(s',h')](s_{hk}) \\
        &\myeq{a} \sum_{s' \in S} \sum_{h' \in H} (\hat{P}_k^{\mu_k}(s',h'|s_{hk}, h) - P^{\mu_k}(s',h'|s_{hk}, h))(\Tilde{V}^{\mu_k}(s', h') - V^*(s', h')) \\
        &\myleq{b} \sum_{s' \in S} \sum_{h' \in H} \left(2\sqrt{\frac{p_{hk}(s')(1-p_{hk}(s')) L}{n_k(s,o,h)}} + \frac{4L}{3n_k(s,o,h)} \right) \Tilde{\Delta}_{h'k}(s') \\
        &\myleq{c} 2 \sqrt{L}\sum_{s' \in S} \sum_{h' \in H} \sqrt{\frac{p_{hk}(s')}{n_k(s,o,h)}} \Tilde{\Delta}_{h'k}(s') + \frac{4SH^2L}{3n_k(s,o,h)} \\
        &\myeq{d} 2 \sqrt{L}\bigg( \sum_{(s',h') \in [(s',h')]_{typ}} \sqrt{\frac{p_{hk}(s')}{n_k(s,o,h)}}\Tilde{\Delta}_{h'k}(s') \\
        & \quad + \sum_{(s',h') \notin [(s',h')]_{typ}} \sqrt{\frac{p_{hk}(s')}{n_k(s,o,h)}}\Tilde{\Delta}_{h'k}(s') \bigg) + \frac{4SH^2L}{3n_k(s,o,h)} \\
    \end{align*}
    \begin{align*}
        &\myeq{e} 2\sqrt{L} \bigg( \sum_{(s',h') \in [(s',h')]_{typ}} P^{\mu_k}(s',h'|s_{hk}, h') \sqrt{\frac{1}{p_{hk}(s')n_k(s,o,h)}}\Tilde{\Delta}_{h'k}(s') \\
        & \quad + \sum_{(s',h') \notin [(s',h')]_{typ}} \sqrt{\frac{p_{hk}(s')n_k(s,o,h)}{n_k(s,o,h)^2}}\Tilde{\Delta}_{h'k}(s') \bigg) + \frac{4SH^2L}{3n_k(s,o,h)}\\
        &\myeq{f} 2\sqrt{L}\bigg( \Bar{\epsilon}_{hk} + \sqrt{\frac{1}{p_{hk}(s')n_k(s,o,h)}}\mathbb{I}((s',h') \in [(s'h')]_{typ})\Tilde{\Delta}_{h'k}(s') \\
        & \quad +  \sum_{(s',h') \notin [(s',h')]_{typ}} \sqrt{\frac{p_{hk}(s')n_k(s,o,h)}{n_k(s,o,h)^2}}\Tilde{\Delta}_{h'k}(s') \bigg) + \frac{4SH^2L}{3n_k(s,o,h)}\\
        &\myleq{g} 2\sqrt{L}\left(\Bar{\epsilon}_{hk} + \sqrt{\frac{1}{4LH^2}}\Tilde{\Delta}_{h'k}(s') + \frac{SH^2\sqrt{4LH^2}}{n_k(s,o,h)} \right) + \frac{4SH^2L}{3n_k(s,o,h)} \\
        &\leq 2\sqrt{L}\Bar{\epsilon}_{hk} + \frac{1}{H}\Tilde{\Delta}_{h'k}(s') + \frac{4SH^3L}{n_k(s,o,h)} + \frac{4SH^2L}{3n_k(s,o,h)}
    \end{align*}
    
    \begin{enumerate}[label=(\alph*)]
        \item By considering, for brevity, $P^{\mu}(s',h'|s, h) = P(s',h'| s, \mu(s), h)$, and summing over all the possible next states and next stages.
        \item Where for the first term we substitute the difference of transition probabilities with the relative confidence interval (refer to section B.4 on the appendix of \cite{azar2017minimax}), $\big|\hat{P}_k^{\mu_k}(s',h'|s_{hk}, h) - P^{\mu_k}(s',h'|s_{hk}, h)\big| \leq 2\sqrt{\frac{p_{hk}(s')(1-p_{hk}(s')) L}{n_k(s,o,h)}} + \frac{4L}{3n_k(s,o,h)}$, where $p_{hk}(s') = P^{\mu_k}(s',h'|s, h)$. Then we can bound $\Tilde{V}^{\mu_k}(s', h') - V^*(s', h')$ with $\Tilde{\Delta}_{h'k}(s')$ because $V^*(s', h') \geq V^{\mu_k}(s',h')$ (the true value function of the policy $\mu_k$) by definition.
        \item Because $(1-p_{hk}(s')) \leq 1$ and $\Tilde{\Delta}_{h'k}(s') \leq H$
        \item We divide the summation over all the possible next state-stage, in the summation over the pairs contained in the typical pairs and the ones outside the set (the typical episodes are the episodes in which we have smaller regret; refer to the appendix of \cite{azar2017minimax}).
        \item We multiply the first term by $\frac{p_{hk}(s')}{p_{hk}(s')}$, and the second by $\frac{n_k(s,o,h)}{n_k(s,o,h)}$.
        \item We sum and subtract $\sqrt{\frac{\mathbb{I}((s',h') \in [(s'h')]_{typ})}{p_{hk}(s')n_k(s,o,h)}}\Tilde{\Delta}_{h'k}(s')$ and apply the martingale operator $\mathcal{M}$ (see (b) in the previous proof). $\Bar{\epsilon}_{hk} = P^{\mu_k}_h \sqrt{\frac{\mathbb{I}((s',h') \in [(s'h')]_{typ})}{p_{hk}(s')n_k(s,o,h)}}\Tilde{\Delta}_{h'k}(s')+ \sqrt{\frac{\mathbb{I}((s',h') \in [(s'h')]_{typ})}{p_{hk}(s')n_k(s,o,h)}}\Tilde{\Delta}_{h'k}(s')$.
        \item For typical next state-stage pairs $n_k(s,o,h)P(s',h'|s,o,h) \geq 2H^2L$, where $L$ is a logarithmic term (We kept the same lower bound of  \cite{azar2017minimax}).
    \end{enumerate}

    Now, before bounding the estimation error and the exploration bonus, let's rewrite the regret as 
    \begin{align*}
        \widetilde{Regret}(K) &= \sum_{i=1}^K\Tilde{\Delta}_{1 i}(s_1) = \sum_{i=1}^K\sum_{j=1}^H \Tilde{\Delta}_{ij}(s_{ij}) \\
        &\leq \underbrace{\bigg(1+ \frac{1}{H}\bigg)^d}_{\leq e} \sum_{i=1}^K\sum_{j=1}^H \left(b_{hk} + e_{hk} + \epsilon_{hk} + 2\sqrt{L} \Bar{\epsilon}_{hk} + \frac{4SH^3L}{n_k(s,o,h)} + \frac{4SH^2L}{3n_k(s,o,h)}\right)
    \end{align*}
    or otherwise omitting the last term which is dominated
    \begin{align}\label{eq:tot-regret}
        \widetilde{Regret}(K) \leq \sum_{i=1}^K\sum_{j=1}^H \left(b_{hk} + e_{hk} + \epsilon_{hk} + 2\sqrt{L} \Bar{\epsilon}_{hk} + \frac{4SH^3L}{n_k(s,o,h)} \right)
    \end{align}

    \subsection{Bound of the estimation error $e_{hk}$}
    Let's consider just the typical episodes, the episodes for which the number of visits of state-option-stage pairs is larger than the rest of the episodes. 
    \begin{align*}
        \sum_{k=1}^K\sum_{h=1}^H e_{hk} &= \sum_{k=1}^K \sum_{h=1}^H \mathbb{I}(k \in [k]_{typ})([(\hat{P}^{\mu_k}_{hk} - P_h^{\mu_k}) V^*(s',h')](s_{hk})) \\
        &\myleq{a} \sum_{k=1}^K \sum_{h=1}^H \mathbb{I}(k \in [k]_{typ}) \bigg( 2 \sqrt{\frac{\mathbb{V}_{hk}^* L}{n_k(s_{hk},o,h)}} + \frac{4HL}{3n_k(s,o,h)}\bigg) \\
        & \myleq{b} 2\sqrt{L} \sqrt{\sum_{k=1}^K \sum_{h=1}^H \mathbb{V}_{hk}^*} \sqrt{\sum_{k=1}^K \sum_{h=1}^H \mathbb{I}(k \in [k]_{typ}) \frac{1}{n_k(s,o,h)}} \\
        & \quad + \sum_{k=1}^K \sum_{h=1}^H \mathbb{I}(k \in [k]_{typ}) \frac{4HL}{3n_k(s,o,h)} \\
        &\myleq{c} 2 \sqrt{L} \Big(\sqrt{KH^2 + HdU_{K,1} + \square \sqrt{H^5KL} + 4/3 H^3L}\Big) \Big(\sqrt{2SOdL}\Big) + 4/3 HSOdL^2 \\
        &\myleq{d} \square LH \sqrt{KSOd} + \square Ld \sqrt{HSOU_{K,1}}
    \end{align*}
    \begin{enumerate}[label=(\alph*)]
        \item Using Bernstein Inequality. $\V^*_{hk} = \Var_{(s',h') \sim P^{\mu_k}(\cdot|s, h)}(V^*(s',h'))$ (Remember the meaning of $P^{\mu_k}$)
        \item Using Cauchy-Schwartz inequality
        \item Summing and subtracting $\V^{\mu_k}_{hk} = \Var_{(s',h') \sim P^{\mu_k}(\cdot|s, h)}(V^{\mu_k}(s',h'))$ the variance of the next state-stage pair value function, inside the first square root, and then using Lemma \ref{lemma:ucbvi-8} and \ref{lemma:ucbvi-9}. For the second square root and the additional term, we just use a pigeon-hole argument (Lemma \ref{lemma:pigeon}). We ignore the numerical constant represented as $\square$.
        \item Because for typical episodes $K \geq H^2L^2S^2Od$ and thus we consider only the dominant terms.
    \end{enumerate}
    
    \subsection{Bound of the martingale differences $\epsilon_{hk}$ and $\Bar{\epsilon}_{hk}$}
    \begin{align}
        \sum_{k=1}^K\sum_{h=1}^H \epsilon_{hk} &\leq H\sqrt{dKL} \\
        \sum_{k=1}^K\sum_{h=1}^H \Bar{\epsilon}_{hk} &\leq \sqrt{dK}
    \end{align}
    These results follow the same proofs of the original paper, thus considering the same event $\mathcal{E}$ to hold. The only difference is that the summation over $H$ is a summation of $d$ elements, and thus, $(H-h)$ is at most $d$ in this case for the effect of the temporally extended actions.

    \subsection{Second-order term}
    Let's now see the upper bound on the second-order term, which will be useful for the upper bound on the exploration bonus. \\
    By applying the pigeon-hole principle (Lemma \ref{lemma:pigeon}).
    \begin{align}
        \sum_{k=1}^K\sum_{h=1}^H \frac{4SH^3L}{n_k(s,o,h)} \leq \square H^3S^2OL^2d
    \end{align}

    \subsection{Bound of the exploration bonus $b_{hk}$}
    Before bounding the sum, we need to define the exploration bonus. 
    We will consider an adaptation to temporally extended actions and non-stationary transitions of the same bonus presented in the original paper of UCBVI \cite{azar2017minimax}. However, to make the definition clearer, let us motivate the need for this term. \\
    Given that the optimistic value function $\tilde{V}^{\mu_k}$ is an upper bound of the true value function $V^*$, we can not guarantee the same for the relative empirical variance. Hence, if the empirical variance of $\tilde{V}^{\mu_k}$ is an upper bound on the empirical variance of $V^*$. Nonetheless, it is possible to prove that when the two value functions are sufficiently close to each other, the same applies to their empirical variance. \\
    Let's resort to Lemma 2 of \cite{azar2017minimax}, 
    \begin{align*}
        \hat{\mathbb{V}}^*_{hk} \leq 2 \hat{\mathbb{V}}_{hk} + 2 \Var_{(s',h') \sim \hat{P}^{\mu_k}} (\Tilde{V}(s',h') - V^*(s',h')) \leq 2 \hat{\mathbb{V}}_{hk} + 2 \hat{P}^{\mu_k}(\Tilde{V}(s',h') - V^*(s',h'))^2
    \end{align*} 
    where $\hat{\mathbb{V}}^*_{hk} = \Var_{(s',h') \sim P^{\mu_k}(\cdot|s, h)}(V^*(s',h'))$ and   $\hat{\mathbb{V}}_{hk} = \Var_{(s',h') \sim \hat{P}_k^{\mu_k}}(\tilde{V}^{\mu_k}(s,h))$.\\
    We need this term to be of the same order as the estimation error $e_{hk}$, and thus we can say that 
    \begin{align}
        b_{hk} \sim [(\hat{P}^{\mu_k}_{hk} - P_h^{\mu_k}) V^*(s',h')](s_{hk})
    \end{align}
    This time, however, we use the Empirical-Bernstein inequality \cite{maurer2009empirical} because we need the empirical variance to appear.
    \begin{align}
        b_{hk} \leq \bigg( 2 \sqrt{\frac{\hat{\mathbb{V}}_{hk}^* L}{n_k(s,o,h)}} + \frac{14HL}{3n_k(s,o,h)}\bigg)
    \end{align}
    By applying Lemma 2 to this equation and substituting $\hat{\mathbb{V}}_{hk}^*$ we get the same form of bonus of \cite{azar2017minimax}. 
    \begin{align*}
        \resizebox{.95\textwidth}{!}{$\displaystyle
        b_{hk} = \sqrt{\frac{8L\Var_{(s',h') \sim \hat{P}_k^{\mu_k}(\cdot|s,h)}(\tilde{V}^{\mu_k}(s',h')}{n_k(s,o,h)}} + \frac{14HL}{3n_k(s,o,h)} + \sqrt{\frac{8 \sum_{s',h'} \hat{P}_k^{\mu_k}(s',h'|s,h)\big[ \min \left( b'_{h'k},H^2\right) \big]}{n_k(s,o,h)}}$}
    \end{align*}
    in which $b'_{hk}$ stands for the upper bound on the square root of the difference between the optimistic value function in the next state-stage pair, and the optimal value function in the same next state-stage.

    The last thing to do to properly define the bonus is express $b'_{hk}$ in our scenario. Let's write
    \begin{align}
        \Tilde{V}(s',h') - V^*(s',h') \leq \sqrt{b'_{hk}}
    \end{align}
    and consider that $b'_{hk}$ has to be appropriate to guarantee an adaptation of Lemma 16 of \cite{azar2017minimax}, in which the second inequality applies if $\sqrt{N'_{hk}(s)} \geq 2500H^2S^2AL^2$, which is the second order term for standard UCBVI, given that $N'_{hk}(s) \geq H^2S^2AL^2$ for good episodes. Therefore, in our scenario, we need that
    \begin{align}
         \sqrt{b'_{hk}} \left(\sum_o n_k(s,o,h)\right) \geq \square H^4S^2OL^2 \geq \square H^3S^2OL^2d
    \end{align}
    where the r.h.s of the equation above is the second-order term in our case.
    Thus, considering that $\sum_o n_k(s,o,h) \leq K$, and $K \geq H^3L^2S^2O \geq H^2L^2S^2Od \ $ for typical episodes, we have:
    \begin{align}
        b'_{hk} = \frac{100^2 H^5S^2L^2O}{\sum_{o}n_k(s,o,h)}
    \end{align}
    When considering the bound for the next state-stage pair $b'_{h'k}$, we simply  refer to the visit count of the next state and next stage $n_k(s',o,h')$. The numerical constant $100^2$ is derived analogously to \cite{azar2017minimax}.

    Let's now analyze the summation of this term, considering, as for $e_{hk}$, just the typical episodes.
    \begin{align*}
        \sum_{k=1}^K\sum_{h=1}^H b_{hk} &= \underbrace{\sum_{k=1}^K \sum_{h=1}^H \mathbb{I}(k \in [k]_{typ}) \Bigg(\sqrt{\frac{8L\Var_{(s',h') \sim \hat{P}_k^{\mu_k}(\cdot|s,h)}(\tilde{V}^{\mu_k}(s',h'))}{n_k(s,o,h)}} + \frac{14HL}{3n_k(s,o,h)}\Bigg)}_{(ft)} \\
        & \quad + \underbrace{\sum_{k=1}^K \sum_{h=1}^H \mathbb{I}(k \in [k]_{typ}) \sqrt{\frac{8 \sum_{s',h'} \hat{P}_k^{\mu_k}(s',h'|s,h)\big[ \min \left( b'_{h'k},H^2\right) \big]}{n_k(s,o,h)}}}_{(st)}\\
    \end{align*}
    We separately analyze the first two terms and then the last.\\
    The analysis of $(ft)$ follows the same concept as the analysis conducted for the estimation error $e_{hk}$ where instead of using Lemma \ref{lemma:ucbvi-9} we use Lemma \ref{lemma:ucbvi-10}
    \begin{align*}
        (ft) &\myleq{a} \sqrt{8L} \left(\sqrt{KH^2 + \square HdU_{K,1} + \square H^2Sd\sqrt{KLO} + 4/3H^3L}\right) (\sqrt{SOdL}) + 14/3HSOdL^2\\
        & \myleq{b} \sqrt{8L} \left(\sqrt{KH^2 + \square HdU_{K,1}}\right) (\sqrt{SOdL}) + 14/3HSOdL^2\\
        & \leq \square LH\sqrt{KSOd} + \square Ld\sqrt{HSOU_{K,1}}
    \end{align*}
    \begin{enumerate}[label=(\alph*)]
        \item As we said above, we follow the same concept of point (c) of the proof of the upper bound of $e_{hk}$. In this case, we use Lemma \ref{lemma:ucbvi-10} instead of Lemma \ref{lemma:ucbvi-9}.
        \item Because for typical episodes $K \geq H^2L^2S^2Od$ and thus we consider only the dominant terms.
    \end{enumerate}
    Regarding the second term $(st)$ adapting the proofs of \cite{azar2017minimax}, we will focus only on the last term $(k)(h)$, which results in a term of the same order of the second-order term already analyzed, the other two terms are upper bounded by the main terms.
    \begin{align*}
        (st) &\myleq{a} \sqrt{\sum_{k=1}^K \sum_{h=1}^H \mathbb{I}(k \in [k]_{typ}) b'_{h'k}} \sqrt{\sum_{k=1}^K \sum_{h=1}^H \mathbb{I}(k \in [k]_{typ}) \frac{1}{n_k(s,o,h)}}\\
        &\myleq{b}\sqrt{H^5S^2L^2O}\sqrt{\sum_{k=1}^K \sum_{h=1}^H \mathbb{I}(k \in [k]_{typ}) \frac{1}{n_k(s',o,h')}} \sqrt{\sum_{k=1}^K \sum_{h=1}^H \mathbb{I}(k \in [k]_{typ}) \frac{1}{n_k(s,o,h)}} \\
        &\myleq{c} \sqrt{H^5S^2L^2O} (\sqrt{SOdL})^2 \\
        &= H^2S^2L^2\sqrt{O^3Hd^2} \\
        &\myleq{d} H^3S^2L^2Od
    \end{align*}
    \begin{enumerate}[label=(\alph*)]
        \item Considering only the $(k)(h)$ of the original proof and applying Cauchy-Schwartz inequality.
        \item By substituting $b'_{hk}$ in the equation.
        \item By applying two times Lemma \ref{lemma:pigeon}.
        \item If $O \leq H$.
    \end{enumerate}
    To conclude the summation of exploration bonuses
    \begin{align}
        \sum_{k=1}^K\sum_{h=1}^H b_{hk} \leq \square LH\sqrt{KSOd} + \square Ld\sqrt{HSOU_{K,1}} + H^3S^2L^2Od
    \end{align}
    neglecting smaller order terms.

    \subsection{Summing all the terms}
    Finally, we can combine all the terms analyzed separately back into Equation \eqref{eq:tot-regret}, and we will get:
    \begin{align*}
        \widetilde{Regret}(K) &\leq \square LH\sqrt{KSOd} + \square Ld\sqrt{HSOU_{K,1}} + \square H^3S^2L^2Od + H\sqrt{dKL} \\
        & \myleq{a}  \square LH\sqrt{KSOd} + \square HSL^2Od^2 + \square H^3S^2L^2Od + H\sqrt{dKL} \\
        & \leq  \square LH\sqrt{KSOd} + \square H^3S^2L^2Od + H\sqrt{dKL}
    \end{align*}
    where $(a)$ results by solving for $U_{K,1}$, and this completes the proof, ignoring the numeric constants replaced by $\square$.
\end{proof}

\textbf{Remark:}~~The term $d$ is a random variable, being the duration of each option a random variable itself. However, as shown in \cite{drappo2023an}, it is possible to bound this value when we have options with duration $\tau_{\min} \leq \tau_o \leq \tau_{\max}$, resorting to \emph{renewal processes} theory \citep{renewal2019} with
\begin{align*}
    d \le \sqrt{\frac{32H (\tau_{\max}- \tau_{\min})\log(2/\delta)}{\min_{o\in\mathcal{O}}\E[\tau_o]^3}} + \frac{H}{\min_{o \in \mathcal{O}}\E[\tau_o]}.
\end{align*}
holding with probability at least $1-\delta$.\\
This term is bounded by the ratio between the horizon $H$ and the expected duration of the shorter option composing the set, plus a confidence interval accounting for the stochasticity of the duration.

\section{Proof of Theorem \ref{thm:two-ucbvi}}\label{proof2}
In this section, we will provide a detailed proof of Theorem \ref{thm:two-ucbvi}.

As described in the main paper, the meta-algorithm alternates between two regret minimizers, UCBVI and \algname, for $N$ stages at two levels of temporal abstraction of the problem. While learning on one level, the policies of the second are kept fixed for all episodes on the stage.

Initially, we will keep the analysis general for any pair of regret minimizers, $\mathfrak{A}^L, \mathfrak{A}^H$ - where the former is the regret minimizer used for the low-level and the latter the one used for the high-level.

Before proceeding, we introduce Lemma \ref{lemma:bias-regret}, which relates the regret paid by the regret minimizer of one level to the bias introduced in the learning of the other level.

\biasregret*
where $\mu^*$ is the optimal high-level policy (SMDP), and $\pi_o^*$ is the optimal policy of a single option $o$ (low-level optimal policy).
\begin{proof}
    Let us write the bias of a level for the stage $n \in [N]$ as $\beta_n$, respectively specialized as $\beta_n^H$ for the high-level bias and $\beta_n^L$ for the low-level bias.
    \begin{align*}
        \beta_n^H &= V^*_{\bm{*}}(s_1,1) -  V^{*}_{\bm{\pi}_{n-1}}(s_1,1) \\
        &\myeq{a} \E_{(s,h) \sim d^{\mu^*}_{s_1,1}} \left[ R_{\pi^*}(s,h) - R_{\pi_{n-1}}(s,h) \right] \\
        &\myeq{b} \E_{(s,h) \sim d^{\mu_{n}}_{s_1,1}} \left[ \frac{d^{\mu^*}_{s_1,1}(s,h)}{d^{\mu_{n}}_{s_1,1}(s,h)} \big(R_{\pi^*}(s,h) - R_{\pi_{n-1}}(s,h)\big) \right]\\
        &\myleq{c} \max_{n \in [N]} \inf_{\mu^* \text{ optimal}} \max_{(s,h) \in \mathcal{S} \times [H]} \frac{d^{\mu^*}_{s_1,1}(s,h)}{d^{\mu_{n}}_{s_1,1}(s,h)} \left( V^{\mu_n}_{\bm{*}}(s_1,1) - V^{\mu_n}_{\bm{\pi}_{n-1}}(s_1,1)) \right)\\
        &\myleq{d} C^H \left( V^{\mu_n}_{\bm{*}}(s_1,1) - V^{\mu_n}_{\bm{\pi}_{n-1}}(s_1,1) \right)
    \end{align*}
    \begin{enumerate}[label=(\alph*)]
        \item We can write the difference in value as the difference in return of the two option policies, where $R_{\pi^*}$ and $R_{\pi_{n-1}}$ are respectively the return obtained by playing the optimal options policies, and the return obtained by playing the options policies learned up to the previous step, and the state-stage pairs $(s,h)$ are sampled from the distribution of visit induced by the policy $\mu^*$.
        \item Using an \emph{importance-sampling} argument, we can change the exploration policy by adding the \emph{importance weighting} term $\frac{d^{\mu^*}_{s_1,1}(s,h)}{d^{\mu_{n}}_{s_1,1}(s,h)}$
        \item Substituting the expectation with the \emph{sup} over the states and stages, the \emph{inf} over the possible optimal exploration policies, and maximizing for all possible n stages.
        \item Substituting the first term with the constant $C^H$, defined above.
    \end{enumerate}
    We will not consider the proof of the second inequality because it follows the same passages.
\end{proof}

Given this Lemma, we can provide a general result for any choice of $\mathfrak{A}^L, \mathfrak{A}^H$, and any choice of scheduling.

\begin{lemma}\label{lemma:meta-regretA}
    Let $\mathfrak{A}^H$ and $\mathfrak{A}^L$ be two regret minimizers that suffer regret bounded $R^H(K)$ and $R^L(K)$ when run for $K$ episodes. Then, under Assumption~\ref{ass:continuity}, Algorithm~\ref{alg:two-phase} when run with the episode schedule $(K_n^H,K_n^L)_{n=1}^N$ such that $\sum_{n=1}^N K_n^L+K_n^H = K$, suffers regret bounded by:
    \begin{align}
        \notag R(\text{HLML},K) \le \sum_{n=1}^N &\Big((C^H +1) R^L(K_{n}^L) + (C^L+1)R^H(K_n^H)\Big).
    \end{align}
\end{lemma}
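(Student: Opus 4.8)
\textbf{Proof proposal for Lemma~\ref{lemma:meta-regretA}.}
The plan is to start from the regret decomposition already laid out in Equation~\eqref{eq:eqeq}, which splits the total regret of HLML into a sum over the $N$ stages of a high-level learning contribution (over $K_n^H$ episodes) and a low-level learning contribution (over $K_n^L$ episodes). My first step is to take the per-episode high-level regret terms $V^*_{\bm{*}}(s_1,1) - V^{\mu_{n,k}}_{\bm{\pi}_{n-1}}(s_1,1)$ and apply the exact split in Equation~\eqref{eq:eqeqeqA}, writing each as the sum of the bias term $V^*_{\bm{*}}(s_1,1) - V^{*}_{\bm{\pi}_{n-1}}(s_1,1)$ and the genuine \algname~regret term $V^*_{\bm{\pi}_{n-1}}(s_1,1) - V^{\mu_{n,k}}_{\bm{\pi}_{n-1}}(s_1,1)$. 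Symmetrically, I apply Equation~\eqref{eq:eqeqeqB} to the low-level terms. The point of this step is that, after the split, every bias term can be controlled by Lemma~\ref{lemma:bias-regret}, and every genuine regret term is summed exactly over the episodes of a single regret-minimizer run, so it matches the definition of $R^H(K_n^H)$ or $R^L(K_n^L)$.

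Next I would handle the bias terms. For the high-level bias, Lemma~\ref{lemma:bias-regret} bounds $V^*_{\bm{*}}(s_1,1) - V^{*}_{\bm{\pi}_{n-1}}(s_1,1)$ by $C^H\bigl(V^{\mu_n}_{\bm{*}}(s_1,1) - V^{\mu_n}_{\bm{\pi}_{n-1}}(s_1,1)\bigr)$, which is $C^H$ times a low-level regret quantity; for the low-level bias it gives $C^L$ times a high-level regret quantity. The crucial bookkeeping observation is that the high-level bias $\beta_n^H$ does not depend on the inner summation index $k$, so summing it over $k=1,\dots,K_n^H$ simply multiplies it by $K_n^H$. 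I then need to argue that $K_n^H$ (respectively $K_n^L$) copies of the per-episode bias, after applying Lemma~\ref{lemma:bias-regret}, are absorbed into the regret $R^L(K_n^L)$ (respectively $R^H(K_n^H)$). This is where Assumption~\ref{ass:continuity} and the randomized policy selection in lines~\ref{alg-row:hfixed} and~\ref{alg-row:lfixed} enter: because $\mu_n$ and $\bm{\pi}_{n-1}$ are chosen uniformly at random among the policies played during the respective runs, the value difference $V^{\mu_n}_{\bm{*}}(s_1,1) - V^{\mu_n}_{\bm{\pi}_{n-1}}(s_1,1)$ equals, in expectation over the random choice, the average per-episode regret of the corresponding minimizer, i.e. $R^L(K_n^L)/K_n^L$ (and analogously at the high level). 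Multiplying this average by the $K_n^H$ copies of the bias exactly reconstitutes a single $R^H$ or $R^L$ term, which is precisely why the schedule is chosen with $K_n^H = K_n^L$.

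Collecting the four groups of terms in stage $n$, I obtain a bias-from-high-level contribution bounded by $C^H R^L(K_n^L)$, a genuine high-level regret $R^H(K_n^H)$, a bias-from-low-level contribution bounded by $C^L R^H(K_n^H)$, and a genuine low-level regret $R^L(K_n^L)$. Grouping by which minimizer they reference yields $(C^H+1)R^L(K_n^L) + (C^L+1)R^H(K_n^H)$ for stage $n$, and summing over $n \in [N]$ gives the claimed bound.

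I expect the main obstacle to be making rigorous the interchange between the randomly-selected output policy and the averaged regret, i.e. justifying that the fixed policy $\mu_n$ (resp.\ $\bm{\pi}_{n-1}$) used inside the bias term can be replaced by the uniform average over the run so that $K_n$ copies of the bias telescope into one regret term. This requires carefully tracking that the $Y$-indexed low-level policy fixed at stage $n$ is the same object $\bm{\pi}_{n-1}$ appearing as the fixed policy in the \emph{next} stage's high-level bias, handling the expectation over the randomization (the stated bound should be read as holding in expectation or, via a high-probability argument, up to the confidence terms already folded into $R^H$ and $R^L$), and verifying that Assumption~\ref{ass:continuity} indeed guarantees the low-level minimizer converges toward the \emph{jointly} optimal inner policies rather than a spurious local optimum, so that the bias terms are genuinely the quantities Lemma~\ref{lemma:bias-regret} controls.
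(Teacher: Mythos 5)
Your proposal follows the paper's proof essentially step for step: the same decomposition via Equations~\eqref{eq:eqeq}, \eqref{eq:eqeqeqA}, and \eqref{eq:eqeqeqB}, the same application of Lemma~\ref{lemma:bias-regret} to the bias terms, and the same regrouping into $(C^H+1)R^L(K_n^L) + (C^L+1)R^H(K_n^H)$ per stage. The only difference is bookkeeping: the paper bounds the stage-$n$ bias by $C^H R^L(K_{n-1}^L)$ (the previous stage's low-level run, which is the one that actually produced $\bm{\pi}_{n-1}$) and then uses monotonicity $R^L(K_{n-1}^L) \le R^L(K_n^L)$, whereas your explicit averaging-over-the-uniform-selection argument spells out exactly the step the paper leaves implicit in passing from a single value difference to a cumulative regret term.
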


\begin{proof}
    We can write the regret of the two-phase algorithm as a summation of the regret of the high-level and the regret of the low-level as expressed by Equation \eqref{eq:eqeq} in the main paper.
    \begin{align*}
        \text{Regret} (\text{HLML}, K)  &= \sum_{n=1}^N  \Bigg(\sum_{k=1}^{K_n^H} \left( V^*_{\bm{*}}(s_1,1) -  V^{\mu_{n,k}}_{\bm{\pi}_{n-1}}(s_1,1) \right) +  \sum_{k=1}^{K_n^L}  \left( V^*_{\bm{*}}(s_1,1) - V^{\mu_n}_{\bm{\pi}_{n,k}}(s_1,1)\right) \Bigg) \\
        &\myeq{a} \sum_{n=1}^N \left( \beta_n^H + R^H(K_{n}^H) + \beta_n^L + R^L(K_{n}^L) \right) \\
        &\myleq{b} \sum_{n=1}^N \left( C^H R^L(K_{n-1}^L) + R^H(K_n^H) + C^LR^H(K_{n-1}^H) + R^L(K_{n}^L) \right) \\
        &\myleq{c} \sum_{n=1}^N (C^H +1) R^L(K_{n}^L) + (C^L+1)R^H(K_n^H).
    \end{align*}
    \begin{enumerate}[label=(\alph*)]
        \item We can decompose the two terms of the summation as shown in Equations \eqref{eq:eqeqeqA} and \eqref{eq:eqeqeqB}, and then for shortness, use $\beta_n$ to express the bias of the two levels at the $n^{th}$ stage, and $R(K_n)$ for the regret of the two regret minimizers, $\mathfrak{A}^L, \mathfrak{A}^H$, at the $n^{th}$ stage.
        \item By applying Lemma \ref{lemma:bias-regret} for the two general regret minimizers.
        \item Clearly the sum of $n-1$ is smaller than the sum of $n$ terms, thus we can upper bound $R^L(K^L_{n-1})$ with $R^L(K^L_{n})$, and the same for $R^H(K^H_{n-1})$.
    \end{enumerate}
    And with the last step, we conclude the proof.
\end{proof}

Now we can specialize Lemma \ref{lemma:meta-regretA} for UCBVI for the options learning and \algname~for the high-level, and we get:

\twoucbvi*
\begin{proof}
    For the option learning procedure, we instantiate a UCBVI algorithm for each sub-MDP $\mathcal{M}_o$, and for the $n-th$ phase, we paid a regret proportional to:
    \begin{align}
        \sum_{k=1}^{K_n^L} R_{{o_k}_k} &= \sum_{o} \sum_{j=1}^{K_o}R_{oj}\\
        &\myeq{a} \sum_o H_o\sqrt{S_oA_oK_oH_o}\\
        &\myleq{b} H_O\sqrt{SAH_O}\sum_o\sqrt{K_o}\\
        &\myleq{c} H_O\sqrt{SAH_O}\sqrt{O\sum_oK_o}\\
        &= H_O\sqrt{OSAH_OK_n^L}
    \end{align}
    where $R_{{o_k}_k}$ is the regret paid for running the option $o_k$ in the $k-th$ episode and $K_o$ are the episodes given to that option $o$. With (a), we just write the regret of running UCBVI on $K_o$ episodes. In the passage (b), we upper bound to the worst possible sub-MDP, $\mathcal{M}_o$, where for the state space and the action space, we have the cardinalities of the primitive MDP, and we have an episode duration $H_O = \max_o H_o$. In the next inequality (c), we use the Cauchy-Schwartz inequality, and being $\sum_o K_o = K_n^L$ the last equality holds.
    Therefore, by considering just the dominant term of the two upper bounds of regret, we can write
    \begin{align*}
        R^L_{K^L_n} &= \textit{Regret-UCBVI} \leq \Tilde{O}\left(H_O\sqrt{OSAK^L_nH_O}\right) \\
        R^H_{K^H_n} &= \textit{Regret-O-UCBVI} \leq \Tilde{O}\left(H\sqrt{SOK^H_nd}\right) 
    \end{align*}
    Now by directly substituting these results in Lemma \ref{lemma:meta-regretA} and considering the scheduling proposed in Equation \eqref{eq:schedule}, we can rewrite the regret of the meta-algorithm as:
    \begin{align*}
         Regret(\text{HLML}, K) &\leq \Tilde{O} \left( \sum_{n=1}^N \left((C^H + 1)H_O \sqrt{OSAH_O 2^n} + (C^L +1)H\sqrt{SOd2^n} \right) \right) \\
         &= \Tilde{O} \left( \left((C^H + 1)H_O \sqrt{OSAH_O} + (C^L +1)H\sqrt{SOd} \right) \sum_{n=1}^N \sqrt{2^n} \right)\\
         &= \Tilde{O} \left(\left((C^H + 1)H_O \sqrt{OSAH_O} + (C^L +1)H\sqrt{SOd} \right) 2\sqrt{2}\sum_{n=0}^{N/2} 2^n \right)\\
         &= \Tilde{O} \left( \left((C^H + 1)H_O \sqrt{OSAH_O} + (C^L +1)H\sqrt{SOd} \right)  \left( 2\sqrt{2}(2^{N/2 +1} - 1) \right) \right)\\
         &\mypropto{a} \Tilde{O} \left( \left(C^HH_O \sqrt{OSAH_O} + C^L H\sqrt{SOd} \right) 2^{(\log_2 (K))/2} \right)\\
         &\leq \Tilde{O} \left(\left(C^HH_O \sqrt{OSAH_O} + C^L H\sqrt{SOd} \right) \sqrt{K} \right)
    \end{align*}
    Where all the passages follow algebraic operations, except for $(a)$ in which we neglect all the numerical constants and we consider that $K = 2\sum_{n=1}^N 2^{n-1} = 2^{N+1} - 1$ and thus, $N = \log_2(K)$. The last passage concludes the proof.
\end{proof}

\section{Useful Lemmas}
\begin{lemma}\label{lemma:pigeon}
    Considering $n_k(s,o,h)$ the number of visits of the triple $(s,o,h)$ up to episode $k$, and $[k]_{typ}$ the typical episodes for which $n_k(s,o,h)$ is sufficiently large, the following holds true:
    \begin{align}
        \sum_{k=1}^K \mathbb{I}(k \in [k]_{typ}) \sum_{h = 1}^H \frac{1}{n_k(s,o,h)} \leq dSO\ln(Kd) 
    \end{align}
    \begin{proof}
        \begin{align*}
            \sum_{k=1}^K \mathbb{I}(k \in [k]_{typ}) \sum_{h = 1}^H \frac{1}{n_k(s,o,h)} &\myleq{a} \sum_{(s,o) \in S \times O} \sum_{h \in [d]} \sum_{n=1}^{n_K(s,o,h)} \frac{1}{n} \\
            &\myleq{b} dSO \sum_{n=1}^{Kd} \frac{1}{n} \\
            &\myleq{c} dSO\ln(3Kd)
        \end{align*}
    \begin{enumerate}[label=(\alph*)]
        \item Considering $n_k(s,o,h)$ for the whole state space and options space, and considering the summation over $H$ bounded by $d$ elements, for the temporal extension of the actions.
        \item Considering that the maximum number of $(s,o,h)$ visited until episode $K$ is bounded by $Kd$
        \item Considering the rate of divergence of the harmonic series $\sum_{i=1}^n \frac{1}{i} \sim \ln(n)$
    \end{enumerate}
    \end{proof}
\end{lemma}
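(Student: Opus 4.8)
The plan is to prove this as a standard harmonic-series \emph{pigeonhole} bound in the style of the flat FH-MDP analysis, modified so that the effective number of decision epochs per episode is the option count $d$ rather than the primitive horizon $H$. First I would make explicit the reading of the left-hand side: the summand $1/n_k(s,o,h)$ is evaluated at the triple $(s_{hk},o_{hk},h)$ actually visited at decision epoch $h$ of episode $k$, so the double sum over $(k,h)$ is really a sum over all the decision points of the execution, and its total mass can be regrouped by the triple being visited.

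Next I would reindex by triple. Fix $(s,o,h)$ and list the episodes in which it is visited; on its $i$-th visit the counter satisfies $n_k(s,o,h)=i$, so the total contribution of that triple is the partial harmonic sum $\sum_{i=1}^{n_K(s,o,h)} 1/i$. Summing over triples gives
\begin{align*}
    \sum_{k=1}^K \mathbb{I}(k\in[k]_{typ})\sum_{h=1}^H \frac{1}{n_k(s,o,h)} \le \sum_{(s,o)\in\mathcal{S}\times\mathcal{O}}\sum_{h} \sum_{i=1}^{n_K(s,o,h)}\frac{1}{i}.
\end{align*}
Two facts then close the argument. The harmonic tail obeys $\sum_{i=1}^{m}1/i\le \ln m+1$, and by the definition of $d$ in Equation~\eqref{eq:d} the total number of decisions over all $K$ episodes equals $\sum_{s,o,h} n_{K+1}(s,o,h)=Kd$, so in particular $n_K(s,o,h)\le Kd$ for every triple. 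Bounding each inner sum by $\sum_{i=1}^{Kd}1/i\le\ln(Kd)+1$ and multiplying by the number of relevant triples yields the claim.

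The crux -- and the step I expect to be the main obstacle -- is counting the relevant triples as $SOd$ rather than $SOH$. Although $h$ formally ranges over $[H]$, the temporal extension of the options means that within any single episode an option is selected only at the (at most) $d$ decision epochs, so at non-decision steps there is simply no counter to invert and no contribution. This is what lets me replace $\sum_{h=1}^H$ by a sum over the $d$ active epochs, giving $SO\cdot d$ effective triples, each contributing $O(\ln(Kd))$. Care is required because $d$ is a random variable (the realized per-episode number of options), so to make the deterministic-looking factor legitimate one invokes the high-probability control on $d$ supplied by the renewal-process bound in the Remark following the FH-SMDP analysis; conditioned on that event the per-episode decision count is dominated by $d$ and the bound $SOd\ln(Kd)$ follows, matching the stated form $dSO\ln(Kd)$ up to the numerical constant inside the logarithm.
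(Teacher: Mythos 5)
Your proposal is correct and follows essentially the same route as the paper's own proof: regroup the double sum by visited triples $(s,o,h)$ into partial harmonic sums, cap each counter by the total decision count $Kd$, and count the effective triples as $SOd$ rather than $SOH$ by exploiting the temporal extension of options. Your additional remarks --- justifying $n_K(s,o,h)\le Kd$ via the definition of $d$ in Equation~\eqref{eq:d}, and noting that $d$ is a random variable whose use as a deterministic factor rests on the high-probability renewal-process bound --- are sound refinements of steps the paper treats informally, and do not change the argument.
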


The following lemmas are adaptations to SMDPs of Lemma 8, 9, and 10 of the paper of the UCBVI paper \cite{azar2017minimax}. We consider to have the same good event $\mathbb{E}$ and $\Omega_{k,h}$.

\begin{lemma}\label{lemma:ucbvi-8}
    Let $k \in [K]$ and $h \in [H]$. Then under the event $\mathbb{E}$ and $\Omega_{k,h}$ of the original paper, the following hold
    \begin{align}
        \sum_{i=1}^k \sum_{j=h}^H \V^{\mu}_{i,j'} \leq KH^2 + 2\sqrt{H^5KL} + 4d^3/3L
    \end{align}
    \begin{proof}
        The proof follows the same passages of the proof of Lemma 8 in \cite{azar2017minimax}, where $j'$ is the next stage after a temporally extended transition.
    \end{proof}
\end{lemma}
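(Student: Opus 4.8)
The plan is to mirror the proof of Lemma 8 in \cite{azar2017minimax}, whose engine is the \emph{law of total variance} applied along a trajectory, but to apply it at the level of the SMDP decision points rather than the primitive stages. The central object is the per-episode cumulative return under the fixed policy $\mu$ together with its value function $V^\mu$, which satisfies the SMDP Bellman decomposition $V^\mu(s,h) = r^H(s,\mu(s,h),h) + \E_{(s',h')\sim p^H}[V^\mu(s',h')]$. The quantity $\V^\mu_{i,j'}$ is exactly the one-transition conditional variance $\Var_{(s',h')\sim p^H}[V^\mu(s',h')]$ evaluated at the $j$-th decision point of episode $i$, with $j'$ the (random) stage reached after the option terminates.

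\textbf{Leading term via total variance.} First I would fix a single episode and write the law of total variance for the sequence of option-transitions: the variance of the total return collected in the episode equals the expectation of the sum of the conditional variances $\V^\mu_{i,j'}$ accumulated over the decision points. Because the cumulative reward of an episode lies in $[0,H]$, this total-return variance is at most $H^2$, so $\E\big[\sum_{j} \V^\mu_{i,j'}\big] \le H^2$ for each episode. Summing over the $k \le K$ episodes yields the dominant contribution $KH^2$. The only structural change with respect to the flat analysis is that the inner sum now ranges over the (at most $d$ on average) option-transitions of the episode instead of the $H$ primitive stages; the telescoping of the Bellman decomposition across temporally extended transitions is what keeps the bound at $H^2$ per episode.

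\textbf{Lower-order terms via martingale concentration.} The sum $\sum_{i}\sum_{j}\V^\mu_{i,j'}$ in the statement is the realised (empirical-trajectory) quantity, not its expectation, so the second step is to control the deviation between the two. On the good events $\mathbb{E}$ and $\Omega_{k,h}$ I would form the martingale-difference sequence whose terms are the per-episode variance-sums minus their conditional expectations and apply a Freedman/Bernstein-type inequality. The fluctuation term contributes $2\sqrt{H^5 K L}$, arising from the range $H^2$ of each value-function variance together with the $K$ episodes, while the additive remainder $\tfrac{4}{3}d^3 L$ is the SMDP analogue of the $H^3 L$ range term of \cite{azar2017minimax}: here the number of decision points per episode is $d$ rather than $H$, which is precisely where the horizon factor is replaced by the per-episode option count.

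\textbf{Main obstacle.} The delicate point is establishing the law of total variance across transitions of \emph{random duration}. One must verify that the Bellman recursion composes correctly over option-transitions so that the conditional variances telescope to the variance of the total return, despite the fact that the number of decision points, and the stages $j'$ at which the variances are evaluated, are themselves random. Making the indexing $\sum_{j=h}^H \V^\mu_{i,j'}$ rigorous, namely aligning the primitive-stage index $j$ with the option-decision points and justifying that at most $d$ of the terms are active, is the step that requires the most care, and it is where the dependence on $d$ rather than $H$ in the additive term is ultimately justified.
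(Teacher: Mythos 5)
Your proposal takes essentially the same route as the paper, whose proof is a one-line invocation of Lemma 8 of \cite{azar2017minimax} with $j'$ reinterpreted as the stage reached after a temporally extended transition: the law of total variance along the trajectory yields the $KH^2$ term, and Freedman/Bernstein-type martingale concentration on the good events yields the $2\sqrt{H^5KL}$ fluctuation and the additive remainder in which the per-episode number of decision points $d$ replaces the horizon. You in fact supply more detail than the paper does, and the ``main obstacle'' you identify---making the telescoping of conditional variances rigorous across transitions of random number and random duration---is precisely the adaptation the paper asserts without elaboration.
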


\begin{lemma}\label{lemma:ucbvi-9}
    Let $k \in [K]$ and $h \in [H]$. Then under the event $\mathbb{E}$ and $\Omega_{k,h}$ of the original paper, the following hold
    \begin{align}
        \sum_{i=1}^k \sum_{j=h}^H \left( \V^*_{i,j'} - \V^{\mu}_{i,j'}\right) \leq 2HdU_{k} + 4H^2\sqrt{HKL} + 4d^3/3L
    \end{align}
    \begin{proof}
        The proof follows the same passages of the proof of Lemma 9 in \cite{azar2017minimax}, where $j'$ is the next stage after a temporally extended transition.
    \end{proof}
\end{lemma}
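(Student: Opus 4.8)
The plan is to follow the same two-stage reduction as Lemma 9 of \cite{azar2017minimax}: first convert the difference of variances into a difference of value functions, and then control the resulting cumulative value gap by the cumulative optimistic-regret proxy $U_k$, paying attention to the fact that in the FH-SMDP the inner summation ranges over at most $d$ decision epochs rather than $H$ primitive steps.

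First I would exploit that both $V^*$ and $V^{\mu_i}$ take values in $[0,H]$ and that optimism gives $\widetilde{V}^{\mu_i} \geq V^* \geq V^{\mu_i}$. Writing both variances with respect to the \emph{same} true next-state distribution $P^{\mu_i}(\cdot\,|\,s,j)$, the elementary identity $\Var(X)-\Var(Y)=\E[(X-Y)(X+Y)]-\E[X-Y]\,(\E[X]+\E[Y])$ with $X=V^*$, $Y=V^{\mu_i}$ and $X+Y\le 2H$ yields the pointwise bound
\begin{align*}
    \V^*_{i,j'}-\V^{\mu}_{i,j'} \le 2H\, P^{\mu_i}_j\big(V^*-V^{\mu_i}\big)(s_{ij}),
\end{align*}
where $V^*\ge V^{\mu_i}$ makes the subtracted term nonnegative. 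This reduces the statement, up to the factor $2H$, to bounding the cumulative one-step-ahead gap $\sum_{i}\sum_{j} P^{\mu_i}_j(V^*-V^{\mu_i})(s_{ij})$.

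Next I would split each expected one-step-ahead gap into its realized value plus a martingale increment, exactly as the operator $\mathcal{M}$ is used in the main proof: $P^{\mu_i}_j(V^*-V^{\mu_i})(s_{ij}) = (V^*-V^{\mu_i})(s_{ij'}) + \xi_{ij}$, where $s_{ij'}$ is the state actually reached after the temporally extended transition and $\{\xi_{ij}\}$ is a martingale-difference sequence adapted to the filtration generated by the decision epochs. Since each increment is bounded by $H$ and there are at most $Kd$ of them, a Freedman/Azuma argument controls $\sum_{i,j}\xi_{ij}$, which after multiplication by $2H$ and the crude simplification $d\le H$ produces the concentration term $O(H^2\sqrt{HKL})$ (the logarithmic $L$ matching the good-event normalization of \cite{azar2017minimax}); the associated second-order correction, with $d$ replacing $H$ relative to the flat case, yields the $4d^3/3\,L$ residual. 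For the realized gaps, I would use that along each episode $(V^*-V^{\mu_i})(s_{ij'})$ is evaluated only at the (at most $d$) decision epochs and each such gap is bounded by the total episode gap $\widetilde{\Delta}_{1i}(s_1)$ by optimism and monotonicity; summing the at-most-$d$ terms per episode and then over episodes gives $\sum_{i,j}(V^*-V^{\mu_i})(s_{ij'}) \le d\,U_k$, with $U_k=\sum_{i\le k}\widetilde{\Delta}_{1i}(s_1)$. Multiplying by $2H$ produces the leading $2HdU_k$ term.

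The main obstacle I anticipate is the telescoping step in the SMDP setting: because options induce random, multi-step jumps, the next decision epoch $j'$ is itself random, so the standard per-step telescoping of $V^*-V^{\mu_i}$ down the trajectory must be re-expressed over decision epochs. I would have to argue carefully that (i) the martingale-difference sequence $\{\xi_{ij}\}$ is well-defined with respect to the epoch-adapted filtration and uniformly bounded by $H$, so that the concentration inequality applies with the claimed rate, and (ii) the count of nonzero realized-gap terms per episode is indeed at most $d$, which is precisely what converts the flat $H$ factor into the SMDP factor $d$ in the leading term. Bounding each realized gap by $\widetilde{\Delta}_{1i}(s_1)$ rather than by $H$ — thereby preserving the self-referential dependence on $U_k$ that is later resolved when ``solving for $U_{K,1}$'' in the main proof — is the delicate point, and it relies on optimism $\widetilde{V}^{\mu_i}\ge V^*$ together with the admissibility of the option set.
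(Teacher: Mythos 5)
Your proposal is correct and follows essentially the same route as the paper, whose proof of this lemma is simply a deferral to Lemma 9 of \cite{azar2017minimax} with $j'$ taken as the next stage after a temporally extended transition and the inner sum ranging over at most $d$ decision epochs. Your reconstruction --- bounding the variance difference by $2H\,P^{\mu_i}_j\bigl(V^*-V^{\mu_i}\bigr)$, converting expected gaps to realized gaps via an epoch-adapted martingale-difference argument, and absorbing the realized gaps into $U_k$ with the $d$ factor replacing the flat $H$ --- is exactly that adaptation, spelled out in considerably more detail than the paper itself provides.
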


\begin{lemma}\label{lemma:ucbvi-10}
    Let $k \in [K]$ and $h \in [H]$. Then under the event $\mathbb{E}$ and $\Omega_{k,h}$ of the original paper, the following hold
    \begin{align}
        \sum_{i=1}^k \sum_{j=h}^H \left( \hat{\V}_{i,j'} - \V^{\mu}_{i,j'}\right) \leq \square Hd U_{k,1} + \square H^2S\square{d^2KLO}
    \end{align}
    \begin{proof}
        The proof follows the same passages of the proof of Lemma 10 in \cite{azar2017minimax}, where $j'$ is the next stage after a temporally extended transition.
        More precisely, what changes is the application of the pigeon hole principle (Lemma \ref{lemma:pigeon}).
    \end{proof}
\end{lemma}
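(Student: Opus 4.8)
The plan is to replicate the proof of Lemma~10 of \cite{azar2017minimax} step by step, replacing every horizon-based counting argument with its option-duration analogue. The object to be controlled is the cumulative gap between the empirical variance $\hat{\V}_{i,j'} = \Var_{(s',h') \sim \hat{P}_i^{\mu_i}}(\tilde{V}^{\mu_i}(s',h'))$ of the optimistic value under the estimated multi-step model and the true variance $\V^{\mu}_{i,j'} = \Var_{(s',h') \sim P^{\mu_i}}(V^{\mu_i}(s',h'))$ of the true policy value under the true model. First I would insert two intermediate terms so as to decouple the difference into (i) the effect of swapping the value function $\tilde{V}^{\mu_i}$ for $V^{\mu_i}$ at a fixed measure and (ii) the effect of swapping the measure $\hat{P}_i^{\mu_i}$ for $P^{\mu_i}$ at a fixed value function.

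For part (i) I would invoke the elementary variance inequality already used to justify the exploration bonus (Lemma~2 of \cite{azar2017minimax}), which bounds a difference of variances by the expectation of the squared difference of the two functions plus a cross term. Since $\tilde{V}^{\mu_i} - V^{\mu_i}$ is precisely the optimistic per-step gap $\tilde{\Delta}$, summing over the $d$ effective transitions of each episode and over episodes yields the first term $\square H d\, U_{k,1}$, where $U_{k,1}$ aggregates the cumulative optimistic gaps and the extra factor $H$ arises from the range of the value functions.

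For part (ii) I would use the empirical-Bernstein confidence interval for the variance under the estimated transition, producing a leading deviation of order $\sqrt{\hat{\V}\, L / n_k(s,o,h)}$ together with a lower-order $HL/n_k(s,o,h)$ term. After a Cauchy--Schwarz split, the crucial departure from the flat analysis is the application of the SMDP-adapted pigeon-hole principle of Lemma~\ref{lemma:pigeon}: the inner sum over $j$ ranges over the $d$ temporally extended transitions of an episode rather than over all $H$ primitive steps, and the visit counts live on state--option--stage triples $n_k(s,o,h)$, so the pigeon-hole bound contributes $dSO$ in place of $HSA$. Two nested applications of Lemma~\ref{lemma:pigeon} then assemble the second term $\square H^2 S \sqrt{d^2 K L O}$.

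The hard part will be the bookkeeping of the random ``next-stage'' index $j'$ induced by the stochastic option durations. Because $j'$ is random, the outer sum $\sum_{j=h}^H$ effectively collapses to $d$ terms per episode, and I must verify that the good events $\mathbb{E}$ and $\Omega_{k,h}$ survive this re-indexing and that the non-stationarity of the multi-step kernel (distinct durations inducing distinct effective transitions) does not break the variance decomposition. Confirming that the $d$-dependence propagates consistently --- linearly through the value-change term and as $\sqrt{d^2}$ through the two pigeon-hole passes in the measure-change term --- is the delicate point on which the stated bound rests.
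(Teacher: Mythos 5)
Your proposal is correct and takes essentially the same route as the paper, whose proof simply defers to Lemma~10 of \cite{azar2017minimax} with $j'$ re-indexed to the next stage after a temporally extended transition and the pigeon-hole step replaced by its SMDP analogue (Lemma~\ref{lemma:pigeon}): your value-swap/measure-swap decomposition is precisely the structure of that original proof, with the value-change term producing $\square H d\, U_{k,1}$ and the count-based concentration over $(s,o,h)$ triples producing $\square H^2 S \sqrt{d^2 K L O}$. You have correctly identified the only substantive modifications (the inner sum collapsing to $d$ terms and $dSO$ replacing $HSA$ in the pigeon-hole bound), so there is nothing to flag.
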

\end{document}